\def\@captype{table}
\declaretheorem[name=Lemma,refname={Lemma,Lemmas}]{lem}
\declaretheorem[name=Proposition,refname={Proposition,Propositions}]{prop}
\newcommand{\bmf}[1]{\bm{\mathsf{#1}}}
\newcommand{\loss}{\mathcal{L}}
\newcommand{\R}{\mathbb{R}}
\newcommand{\ve}{\bmf{e}}
\newcommand{\vB}{\bmf{B}}
\newcommand{\vb}{\bmf{b}}
\newcommand{\vv}{\bmf{v}}
\newcommand{\vp}{\bmf{p}}
\newcommand{\vq}{\bmf{q}}
\newcommand{\vx}{\bmf{x}}
\newcommand{\vz}{\bmf{z}}
\newcommand{\vu}{\bmf{u}}
\title{How to prepare your task head for finetuning}
\author{%
Yi Ren\\
University of British Columbia\\
\texttt{renyi.joshua@gmail.com}
\And
Shangmin Guo\\
University of Edinburgh\\
\texttt{s.guo@ed.ac.uk}\;\;\;\;\;\;\;\;\;\;\;\;\;\;\;\;\;\;\;\;\;\;\;\;\;\;\;\;\;\;\;\;\;\;\;\;\;\;\;\;\;\;\;
\AND
Wonho Bae\\
University of British Columbia\\
\texttt{whbae@cs.ubc.ca}
\And
Danica J.\ Sutherland\\
University of British Columbia \& Amii\\
\texttt{dsuth@cs.ubc.ca}
}
\begin{document}

\maketitle

\begin{abstract}
In deep learning, 
transferring information from a pretrained network to a downstream task by finetuning  has many benefits. 
The choice of task head plays an important role in fine-tuning, 
as the pretrained and downstream tasks are usually different. 
Although there exist many different designs for finetuning, 
a full understanding of when and why these algorithms work has been elusive.
We analyze how the choice of task head controls feature adaptation and hence influences the downstream performance. 
By decomposing the learning dynamics of adaptation,
we find that the key aspect is the training accuracy and loss at the beginning of finetuning, 
which determines the ``energy'' available for the feature's adaptation. 
We identify a significant trend in the effect of changes in this initial energy on the resulting features after finetuning.
Specifically, as the energy increases, 
the Euclidean and cosine distances between the resulting and original features increase,
while their dot products (and the resulting features' norm) first increase then decrease.
Inspired by this, we give several practical principles that lead to better downstream performance. 
We analytically prove this trend in an overparamterized linear setting, 
and verify its applicability to different experimental settings.
\end{abstract}

%%%%%%%%%%%%%%%%%%%%%%%%%%%%%%%%%%%%%%%%%%%%%%%%%%%%%%%%%%%%%%%%%%%%%%%%%%%%%%%%%%%%%%%%%%%
\section{Introduction}

In the era of deep learning, pretraining a model on a large dataset and adapting it to downstream tasks is a popular workflow.
With the help of large amount of data and huge computing resources,
the pretrained model can usually provide beneficial features for the downstream tasks.
Such a framework is proven to be efficient and effective in many domains and tasks,
e.g. natural language processing \citep{bert},
computer vision \citep{chen2020improved},
graph based learning \citep{liu2022pretraining}, and so on.
Although different variants of pretraining and finetuning (FT) methods are widely applied --
including direct finetuning, finetuning after linear probing \citep{kumar2022fine}, side-tuning \citep{zhang2020side},
using different learning rates for different layers \citep{zhang2021revisiting}, and more --
a detailed understanding of how features are adapted during finetuning under different settings remains elusive.

Our work builds significantly off the analysis of
\citet{kumar2022fine}, who study the interactions between the ``task head'' (the final layer of the network, usually randomly initialized) and the ``backbone'' (usually copied from the pretrained model).
\citeauthor{kumar2022fine} claim that the standard finetuning method,
randomly initializing a task head then updating all parameters of the whole network,
can distort the pretrained features and hence can deteriorate the generalization ability if (as they assume) the previous backbone features were optimal for the downstream task.
By analyzing an overparameterized linear model,
they prove that linear probing (i.e., only updating the parameters of the task head) 
first, followed by finetuning the whole network, leads to better performance in their setting.

In this work, we consider less stringent assumptions than they made, and study more practical settings from a different perspective.
First, we consider scenarios where the pretrained features are not optimal for the downstream tasks, thus feature adaptation is indeed beneficial.
%\footnote{For example, if the model is pretrained on ImageNet (the features care more about texture \citep{geirhos2020shortcut})
%but the downstream task is on a line sketch dataset.
%Or if the model is pretrained using SimCLR (which focus more on augmentation invariance) 
%while the downstream task do care about those transformations.}
Unlike the two extreme cases studied by~\cite{kumar2022fine}, i.e. finetuning with fully random initialization and fully-pretrained parameters, we consider intermediate cases where features are mildly adapted by stopping earlier (before convergence) in the linear probing procedure.
%Towards this goal, we consider intermediate cases between the two algorithms studied by~\cite{kumar2022fine},
%which were finetuning with a randomly initialized head
%and a fully-trained task head:
%we consider stopping earlier (before convergence) during the linear probing process, which leads to features are mildly adapted.
To better understand the feature's behavior,
we decompose the learning dynamics of the feature vector during finetuning based on ``energy'' and ``direction'' of the learning. % and use the concept of `energy' and `direction' to explain it.
% By analyzing four different distance related metrics on the discrepancy between features before/after FT,
We discover a non-trivial trend in how this ``energy'' affects the way that features change from their initialization,
which can inspires us to design an appropriate finetuning procedure.
Under this framework,
we demonstrate that the ``unchanged feature'' assumption of \citet{kumar2022fine} is hard to achieve.
% even if we LP the head infinitely long:
% as zero energy is almost impossible during FT. 

Second, our task heads are not necessarily linear.
% it can also have similar capacity compared with the backbone.
Inspired by the illustrations of \citet{olah2020zoom},
it is reasonable to only preserve the lower layers of the pretrained model and reinitialize the top layers, 
assuming that the low-level features are common across task.
That is, the probed task head is non-linear, and we refer to this more general process as ``head probing'' (HP) rather than linear probing.
Our analysis can also help to explain feature behavior in this setting.

Finally, following our analysis,
we provide a user guide to conclude when and why specific methods should be considered.
Specifically,
we have one basic method: stop head probing earlier, \textbf{before} convergence;
and three advanced tricks: 1.) use label smoothing during head probing;
2.) use more complex task head design;
3.) merge and reinitialize some later layers of the backbone and attach them to the task head.
In summary, in this work:
\begin{itemize}[noitemsep,nosep]
    \item we formalize and explain feature adaptation by decomposing the learning dynamics;
    \item we find a non-trivial trend in feature adaptation and verify it in many cases;
    \item and we show how controlling feature adaptation can improve downstream performance.
\end{itemize}

\section{Motivation}
\label{sec:motivation}

Pretrain-then-finetune is a popular workflow for many tasks in deep learning.
One common practice is to 
1) randomly initialize a task head,
2) attach it to a pretrained backbone,
then 3) finetune the whole network together \citep{Li2020Rethinking}.
However, the untrained task head may distort the pretrained features during finetuning.
To solve this problem, \cite{kumar2022fine} propose to train the head to fully converge before the finetuning.
However, suppose we train the head long enough and
its training accuracy (HP-train-acc) converges to 100\%,
then the features won't change during the finetuning stage.
To sum up from the above, we can see that neither probing the head to converge nor no probing is optimal,
since the pretraining and downstream tasks (or datasets) are usually distinct.
To verify this argument, we HP various number of epochs before finetuning,
and record the corresponding validation accuracy after finetuning (FT-valid-acc for short), and the results are shown in \cref{fig:sys_model}.
It is surprising to see that stopping the head training earlier (before the convergence of HP-train-acc) brings more improvement.
As the only variable among these experiments is the parameters of the head before finetuning, the following two questions emerge:
\begin{center}{{\textit{How does the task head influence the pretrained features during finetuning?\\
How does the feature adaptation influence the generalization performance after finetuning?}}}\end{center}

\begin{figure}[t]
    \centering
    \includegraphics[width=0.8\textwidth]{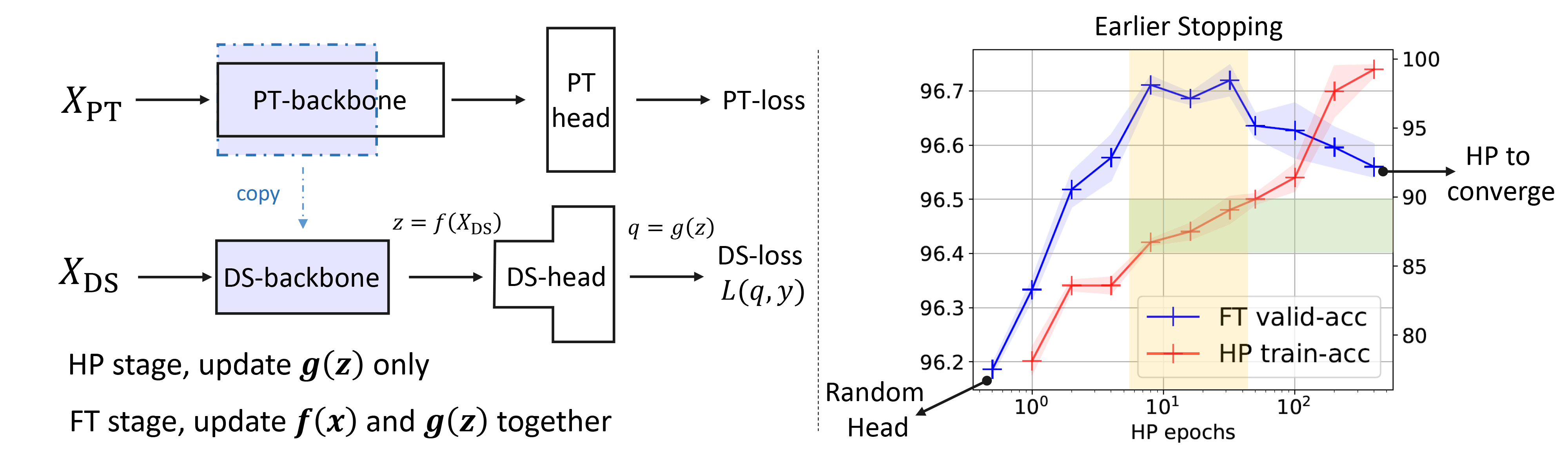}
    \caption{Left: a general example of pretraining (PT), head probing (HP) and finetuning (FT) procedure (DS is short for downstream). 
            Right: an example showing that neither probing the head to converge nor no probing is the optimum (pretrained on ImageNet-1K and finetuned on STL10).
            }
    \label{fig:sys_model}
    \vskip -0.3in
\end{figure}

\section{Background}
\label{sec:background}
We first clarify the scope of our analysis.
We don't consider the details of the pretraining procedure, instead just assuming that there are some well-trained checkpoints for a particular dataset or task.
% and just assume there are some well-pretrained checkpoints.
% However, we do care about what dataset or what task it is pretrained on.
Meanwhile, our formulation is not restricted to classification tasks;
our use of the term ``label'' or ``target'' can be any form of supervisory signals.
%Throughout, in parallel to our notation and analysis of a general model,
%we also consider motivation from the simple overparameterized model of \citet{kumar2022fine}.
% Finally, besides the notations and analysis of a general model,
% we also parallelly introduce a simple overparameterized model 
% (the one used in \citet{kumar2022fine}) 
% to provide some theoretical analysis.

\subsection{Problem Setting and Two Stage Training}
When training a model for a downstream task,
our goal is to find a predictor $f\circ g: \R^d \to \R^k$ that maps a high-dimensional input signal $\vx \in \R^d$ to a task-related prediction $\vq \in \R^k$.
As depicted by the left-bottom panel of \cref{fig:sys_model},
we split the predictor into two parts:
the backbone $f(\vx; \vB): \R^d \to \R^h$ which maps the input signal to intermediate representations $\vz \in \R^h$,
and the task head $g(\vz; \vv): \R^h \to \R^k$ which gives the prediction vector $\vq\in\R^k$ (e.g. logits in classification tasks using cross-entropy loss).
Usually, the backbone $f$ is parameterized by $\vB$ and initialized by copying from a pretrained model.
The task head $g$ parameterized by $\vv$, on the other hand, is usually randomly initialized and might be a complex non-linear function.
%For example, $f(\vx;\vB)$ could be the first 3 blocks of a standard ResNet18 \citep{resnet} 
%while $g(\vz;\vv)$ is a concatenation of the last ResNet block with a linear layer.
%The loss function $\loss(\vq,y)$ will be specified during analysis.
The training has two distinct stages:
1) head probing (HP) where we fix $f$ and only update the parameters $\vv$ of $g$ for $\tau$ epochs;
2) finetuning (FT) where the parameters $\{\vv, \vB\}$ of $f\circ g$ are updated together until convergence.
% We consider adapting the backbone, i.e., FT, is a necessary stage (but FT zero epoch is also a valid choice)
In this work, we analyze how the FT stage is influenced by the architecture and $\vv$ at the beginning of finetuning.
% and analyze the influence of head structure and the initialized head parameters.
% In short, we try to answer the following question:

Following the general formulation above,
we show a simple overparameterized linear regression (or equivalently, a binary classification) below as a case study to illustrate more insights.
Suppose the $N$ input signals are stacked, $X=[\vx^{(1)},...,\vx^{(N)}]^\top\in\R^{N\times d}$,
the loss function can be written as
\begin{equation}
    \loss_{\vB,\vv}=\frac{1}{2}\| X \vB^\top \vv - Y\|_2^2,
    \label{eq:overparm_loss}
\end{equation}
where $\vB\in\R^{h \times d}$, $\vv\in\R^{h}$, and $Y\in\R^{N}$.
That is, $k=1$,
$\vz=\vB\vx$, $q=\vz^\top\vv$, and
% $\loss(\vq,y)=\text{MSE}(\vq,y)$.
$\loss(q,y)= \tfrac12 (q - y)^2$.

\subsection{What to Expect During Adaptation}
Compared with training a randomly initialized model,
% which only contains prior information,
adaptation on downstream tasks needs more care.
One reason is that the pretrained parameters from $f(\vx;\vB)$ inherits all the information from the pretraining task,
even any bias and noise.
Furthermore, as mentioned by \citet{kumar2022fine} and \citet{du2018algorithmic},
$f(\vx;\vB)$ is tied to $g(\vz;\vv)$ at each time step during FT, thus the bias and noise in $f(\vx;\vB)$ also influence the learning of $g(\vz;\vv)$.
Hence, before conducting downstream adaptation,
we might consider: 
to what extent do we \emph{want} to change the feature extractor $f(\vx;\vB)$?
Here, we list three possible cases of how much we should update $f(\vx;\vB)$:
\begin{itemize}
    \item Strong: the pretrained features are far from the optimal ones for downstream task,
          so we need substantial feature adaptation.
    \item Mild: $f(\vx;\vB)$ is reasonably good, 
          but adaptation to the downstream domain is helpful.
    \item Tiny: the pretrained $f(\vx;\vB)$ is near optimal and only need to be slightly adapted.
\end{itemize}

In the rest of this paper,
we first analyze how $f(\vx;\vB)$ and $g(\vz;\vv)$ interact with each other,
under both the general case and the simplified case.
Based on our observations,
we propose several practical principles in \cref{sec:exp_userguide} for better downstream performance.

\section{Interaction between the Backbone and Head}
\label{sec:interaction}
Rather than linking the choice of $\vv_0$ (the task head parameters at the beginning of FT)
to the loss function,
we sketch how $f(\vx;\vB)$ and $g(\vz;\vv)$ interact during FT,
which depicts how $\vz$ changes accordingly.
Although our analysis cannot provide any theoretical guarantee,
knowing \emph{how $\vz$ changes under different $\vv_0$} will lead us to better HP-FT design in practice.

\subsection{Average Initial Energy}

We start from analyzing the behavior of $\vz_{t}^{(j)}=f(\vx^{(j)};\vB_t)$, i.e. the feature extractor at time $t$,
when the network parameters are updated with samples $\vx^{(1)},...,\vx^{(N)}$ using gradient descent.
When using cross-entropy loss, we can have the following result:
\footnote{The case for MSE loss, as well as the derivation of this equation, can be found in \cref{append:tchange}.} 
\begin{equation}
    \vz_{t+1}^{(j)} - \vz_{t}^{(j)} = \frac{\gamma}{N} \sum_{n=1}^{N} 
    \left( \underbrace{\kappa^{(j,n)}_t}_\text{slow-change}\cdot
           \underbrace{\left( \nabla_{\vz}\vq_t^{(n)} \right)^\top}_\text{direction}\cdot
           \underbrace{\left( \ve_{y_n}-\vp_t^{(n)}\right)}_\text{energy}
    \right)
      + \mathcal{O}(\gamma^2),
    \label{eq:z_dynamics}
\end{equation}
where $\gamma$ is the learning rate,
$\kappa^{(j,n)}_t=\left(\nabla_{\vB}\vz_t^{(j)}\right)\left(\nabla_{\vB}\vz_t^{(n)}\right)^\top \in \mathbb{R}^{h\times h}$ is the empirical neural tangent kernel (NTK) of the backbone between $\vx^{(j)}$ and $\vx^{(n)}$ at time $t$,%
\footnote{In a linear model $\vz=\vB\vx$,
$\kappa^{(j,n)}_t=(\vx^{(j)})^\top(\vx^{(n)})\cdot I_{h\times h}$ is invariant during FT (i.e., independent of $t$).}
$\nabla_{\vz}\vq_t^{(n)}\in\mathbb{R}^{k\times h}$ is the gradient of the task head prediction w.r.t the representation vector $\vz$ at time $t$,
\footnote{Note that $\nabla_{\vz}\vq_t^{(n)}$ depends on the parameters of the current task head $\vv_t$ in the generally, whereas $\nabla_{\vz}\vq_t^{(n)}=\vv^\top$ is independent of $n$ if the model is linear.}
$\vp_t^{(n)}=\text{Softmax}(\vq_t^{(n)})$ is the predicted probability vector for input $\vx^{(n)}$,
and $\ve_{y_n}$ is the one-hot vector of the label $y_n$.

In this decomposition,
the first term is often understood to change only slowly during FT, i.e. the lazy-parameters setting used by \citet{chizat2019lazy, yang2020feature}.
The second term determines the direction,
and the last term provides ``energy'' for the update of $\vz$.
Formally, we can define the Average Initial Energy (AIE) and use it to bound the norm of $\vz_T^{(j)}-\vz_0^{(j)}$:

\begin{prop}
    $\mathbb{E}_{\vx^{(j)}}\|\vz_T^{(j)}-\vz_0^{(j)}\|_2\leq c\cdot E_{aie}$,
    where $E_{aie}\triangleq\mathbb{E}_{\vx^{(n)}}\|\ve_{y_n}-\vp_0^{(n)}\|_2$ is the Average Initial Energy (AIE).
    Here $c$ is a constant, $T$ is the FT epochs, and $\vp_0^{(n)}$ is the model's prediction of sample $\vx^{(n)}$ at the beginning of FT (or at the end of HP).
\end{prop}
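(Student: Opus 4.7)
The plan is to telescope the per-step recursion in equation (2) from $t=0$ to $T-1$, apply the triangle inequality together with sub-multiplicativity of the spectral norm, and then bound each of the three factors appearing in the summand separately. Dropping the $\mathcal{O}(\gamma^2)$ remainder (which can be absorbed into the final constant) gives
$$
\|\vz_T^{(j)} - \vz_0^{(j)}\|_2 \;\leq\; \frac{\gamma}{N}\sum_{t=0}^{T-1}\sum_{n=1}^{N} \bigl\|\kappa^{(j,n)}_t\bigr\|\,\bigl\|\nabla_{\vz}\vq_t^{(n)}\bigr\|\,\bigl\|\ve_{y_n}-\vp_t^{(n)}\bigr\|_2,
$$
where the matrix norms are operator norms. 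This is the only nontrivial algebraic manipulation; everything afterwards is about taming the three factors.

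Next I would invoke the lazy-parameter regime already cited in the excerpt (Chizat--Bach, Yang--Hu) to bound the first two factors uniformly by constants $K$ and $V$ that depend only on the architecture and the state at the beginning of finetuning. In the overparameterized linear model from equation (1) this is automatic, as the footnotes already observe: $\kappa^{(j,n)}_t = \langle\vx^{(j)},\vx^{(n)}\rangle\,I_h$ is time-invariant, and $\nabla_{\vz}\vq^{(n)}_t = \vv_t^\top$ for a $\vv_t$ that remains in a bounded neighborhood of $\vv_0$ throughout training.

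The main obstacle is controlling the energy factor $\|\ve_{y_n}-\vp_t^{(n)}\|_2$ at intermediate times by its initial value $\|\ve_{y_n}-\vp_0^{(n)}\|_2$, since this is where the actual training dynamics (and not just initialization) enter. The clean route is the standard linearized-training argument: in the NTK regime the residual vector obeys an approximately linear recursion whose contraction factor $\rho\in(0,1)$ is determined by $\gamma$ and the spectrum of the kernel, yielding $\|\ve_{y_n}-\vp_t^{(n)}\|_2 \leq \rho^t\,\|\ve_{y_n}-\vp_0^{(n)}\|_2$. Summing the resulting geometric series in $t$ bounds the $t$-sum by $(1-\rho)^{-1}\|\ve_{y_n}-\vp_0^{(n)}\|_2$. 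If geometric contraction is unavailable, a softmax Lipschitz bound on $\vp$ in $\vq$ combined with a Gr\"onwall-type recursion on $\|\vp_{t+1}-\vp_t\|_2$ reaches the same qualitative conclusion at the price of a constant polynomial in $T$.

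Combining the three bounds produces
$$
\|\vz_T^{(j)} - \vz_0^{(j)}\|_2 \;\leq\; c\cdot\frac{1}{N}\sum_{n=1}^{N}\|\ve_{y_n}-\vp_0^{(n)}\|_2,
$$
with $c = \gamma K V/(1-\rho)$ (or the $T$-polynomial analogue in the Gr\"onwall case). Since the right-hand side no longer depends on $j$, taking $\mathbb{E}_{\vx^{(j)}}$ on the left preserves the inequality, and the empirical mean on the right is precisely the sample version of $E_{aie}=\mathbb{E}_{\vx^{(n)}}\|\ve_{y_n}-\vp_0^{(n)}\|_2$, yielding the stated bound.
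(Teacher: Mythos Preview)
Your proposal is correct and follows essentially the same skeleton as the paper: telescope the per-step recursion, separate the three factors via the triangle inequality and Cauchy--Schwarz/operator-norm sub-multiplicativity, bound the kernel and direction terms by constants using the lazy-kernel and linear-head assumptions, and then control the energy sum over $t$. The one place you diverge is in that last step: the paper simply \emph{assumes} the residual $\|\ve_{y_n}-\vp_t^{(n)}\|_2$ is monotone nonincreasing during finetuning (``a stable learning process'') and bounds the $t$-sum crudely by $T\cdot\|\ve_{y_n}-\vp_0^{(n)}\|_2$, so that $c=\gamma\, C_1 C_2 T$. Your NTK-contraction route is sharper---it yields a $T$-independent constant $\gamma K V/(1-\rho)$---but it also demands more structure (a genuine spectral gap and linearized dynamics). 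Since the proposition explicitly allows $c$ to depend on $T$, the paper's simpler monotonicity assumption is all that is actually needed, and your Gr\"onwall fallback is closer in spirit to what the paper does.
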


\begin{proof}
See \cref{append:tchange}.
\end{proof}

\begin{figure}[t]
     \centering
     \begin{subfigure}[b]{0.40\textwidth}
         \centering
         \includegraphics[width=\textwidth]{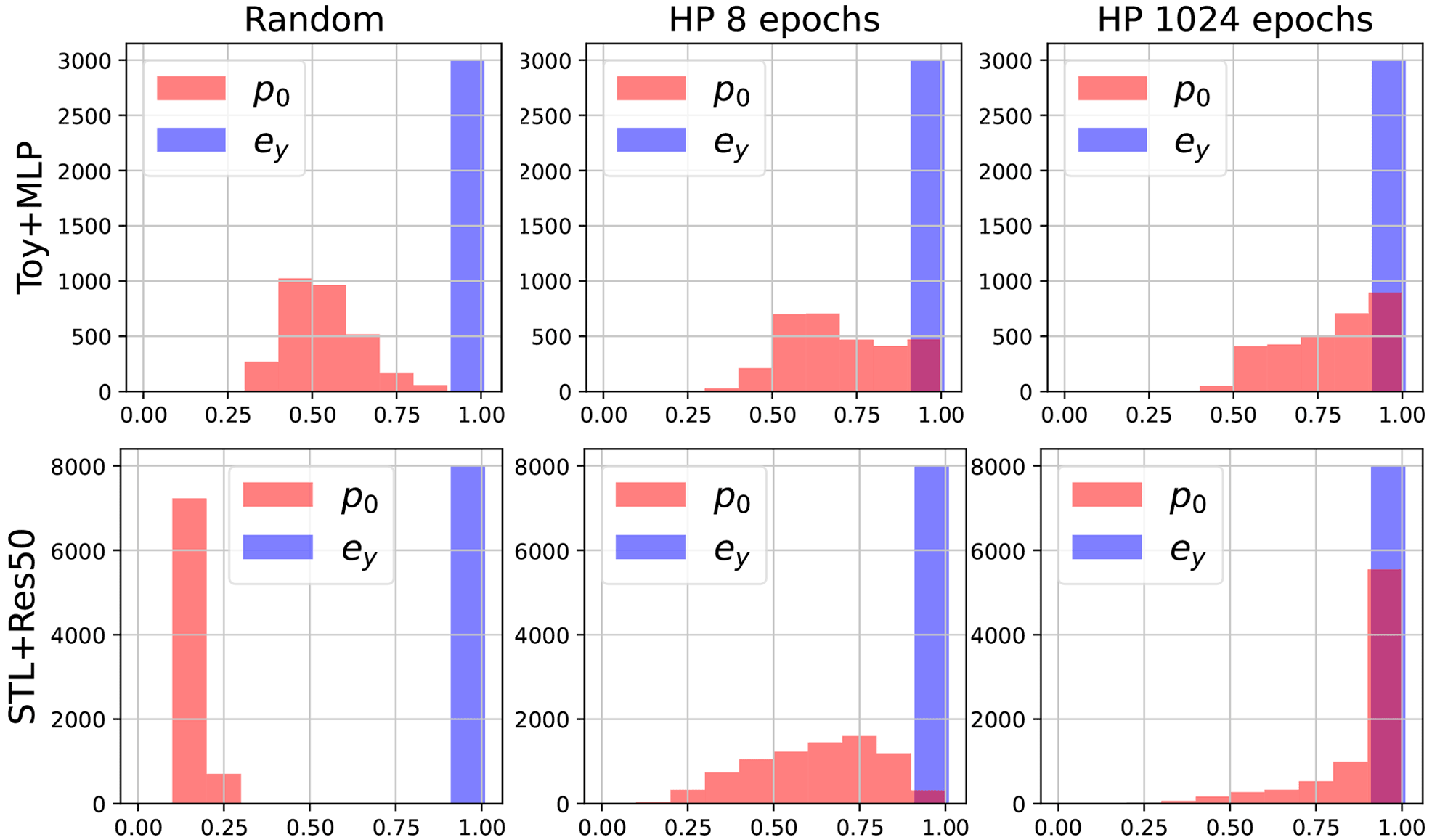}
         \caption{Influence of $\tau$ on energy.}
     \end{subfigure}
     \hfill
     \begin{subfigure}[b]{0.55\textwidth}
         \centering
         \includegraphics[width=\textwidth]{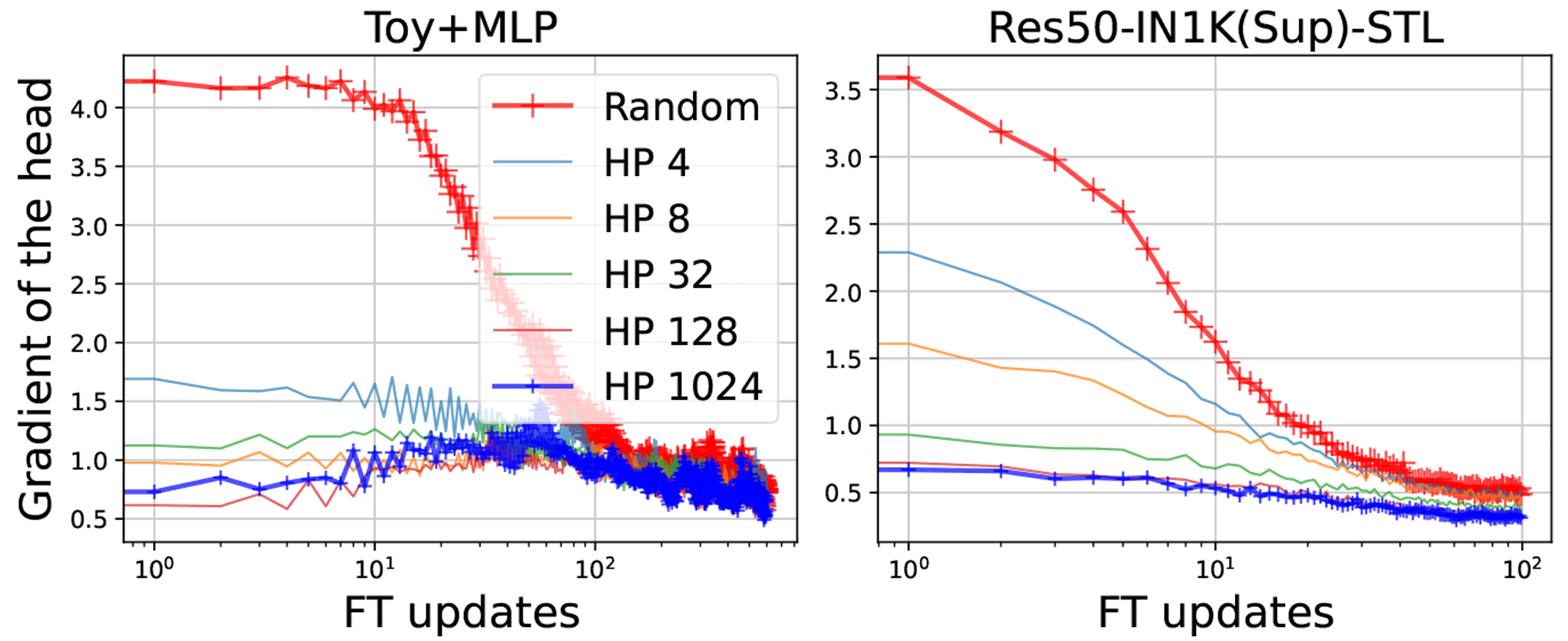}
         \caption{Influence of $\tau$ on direction.}
     \end{subfigure}
        \caption{Left: histogram of $\vp_0$ and $\ve_y$ under different $\tau$.
                 Right: approximated change of the `direction' term in \cref{eq:z_dynamics} under different $\tau$.
                 The titles represent the settings. HP $\tau$ means HP for $\tau$ epochs. Toy+MLP means pretrain a 4-layer MLP on full MNIST then transfer to a distorted subset of MNIST.
                 Res50-IN1K(Sup)-STL means pretrain a ResNet50 on ImageNet-1k using supervised classification,
                 then transfer to a downstream task on STL10. 
                 See \cref{append:experiment_setting} for more details.}
        \label{fig:energy_direction}
\end{figure}

Although this bound is loose and requires approximations based on assumptions,
the proposition supports our intuition:
if we prob the head for too long (i.e. a large $\tau$),
the HP-train-acc can be very high, hence on average $\vp_0$ is close to $\ve_y$ and the features adapt less, and vice versa.
Our intuition is further verified in the right panel of \cref{fig:energy_direction} where we plot the predicted probability of the correct class,
i.e. $[\vp_0^{(n)}]_{y_n}$, for each sample.
Furthermore, we find that it is unlikely to obtain zero energy (not as assumed by \citet{kumar2022fine}) in many practical applications:
even with high accuracy there will always be some gap between $\vp_0$ and $\ve_y$ by definition,
and moreover the training accuracy after HP is sometimes far from perfect (see \cref{sec:experiments}).

\subsection{Non-trivial trend of feature adaptation}

Following Proposition 1,
we can link the adaptation of features to $\tau$ via AIE.
In the proof of Proposition 1,
we disentangle the dependence of the direction and energy terms using Cauchy-Schwarz inequality,
i.e., $(\nabla_{\vz}\vq)^\top(\ve_y-\vp_0)\leq\|\nabla_{\vz}\vq\|_2 \cdot \|\ve_y-\vp_0\|_2$.
However, recall that the direction term \cref{eq:z_dynamics} in also plays an important role,
especially at the beginning of finetuning.

To verify our hypothesis about the direction term,
we depict the change of this term during finetuning in the right panel of \cref{fig:energy_direction}.
Suppose a linear task-head (non-linear heads have similar behavior in the NTK regimes),
this quantity can be approximated by the norm of the gradients to $\vv_t$,
since $\lVert \nabla_{\vz}\vq_{t+1} - \nabla_{\vz} \vq_{t} \rVert_F^2
= \lVert \vv_{t+1}^\top - \vv_t^\top \rVert_2^2
= \gamma^2 \lVert \nabla_{\vv}\loss \rVert_2^2$.
As we find $\|\vv_t\|_2$ changes little during the finetuning stage,
the large $\lVert \nabla_{\vv}\loss \rVert_2$ is more likely from a big direction change.
As illustrated by \cref{fig:energy_direction},
when $\tau=0$,
$\nabla_{\vz}\vq_t$ changes a lot at the beginning of FT,
which can make $\vz^{(n)}$ change in inconsistent directions.
When $\tau=1024$,
the direction term changes only a little through finetuning.

This finding inspires us to look deeper at what is the difference between a strong adaptation (e.g., $\tau=0$) and a mild adaptation (e.g., $\tau=4$).
To get an overall picture of $\vz$'s change, only $\|\vz_T-\vz_0\|_2$ is not enough.
Hence we analyze the following four quantities related to the similarity between the features before finetuning $\vz_0$ and afterwards $\vz_T$:
$\|\vz_T-\vz_0\|_2$, $\|\vz_T\|_2$, $\vz_T^\top\vz_0$, and $\cos(\vz_T,\vz_0)$.
With an overparameterized model,
we can analytically calculate the expressions of them and make the following conclusion:
\begin{prop}[Informal]
    In an overparameterized two-layer linear model,
    when $\tau$ increases (the AIE decreases),
    $\|\vz_T-\vz_0\|_2^2$ monotonically decreases while $\|\vz_T\|_2^2$ and $\vz_T^\top\vz_0$ exhibit a \textbf{quadratic} trend.
    The trend of $\cos(\vz_T,\vz_0)$ is hard to predict, but there is a phase that this value increases fast.
\end{prop}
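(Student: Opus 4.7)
The plan is to reduce $\vz_T$ to an explicit function of the post-HP head $\vv_\tau$ by solving the finetuning dynamics in closed form in the overparameterized lazy regime, then to substitute the closed form for $\vv_\tau(\tau)$ from the head-probing linear regression. The identity I am aiming for is $\vz_T = \vz_0 + c(\tau)\,\vv_\tau$ for a scalar $c(\tau)$ that inherits the decay of the AIE (and hence of $r_0$) with $\tau$; once this is established, the four quantities become simple quadratics in $c(\tau)$ and their $\tau$-dependence follows by inspection.

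I would derive the identity in three steps. \emph{(i) Head probing.} With $\vB=\vB_0$ frozen, HP is a linear regression for $\vv$ on features $\Phi_0 := X\vB_0^\top$ with target $Y$. Gradient flow yields $\vv_\tau = \vv_\infty + e^{-\eta \Phi_0^\top \Phi_0 \tau}(\vv_{\mathrm{init}}-\vv_\infty)$ and the pre-FT residual $r_0 = \Phi_0 \vv_\tau - Y = e^{-\eta \Phi_0\Phi_0^\top \tau}(\Phi_0 \vv_{\mathrm{init}}-Y)$; in particular $\|r_0\|$ is, up to constants, the AIE of Proposition~1 and decays monotonically in $\tau$. \emph{(ii) Lazy finetuning.} The FT gradient flow $\dot\vB = -\vv r^\top X$, $\dot\vv = -\vB X^\top r$ induces the residual ODE $\dot r = -K_t r$ with empirical NTK $K_t = \|\vv_t\|^2 XX^\top + X\vB_t^\top \vB_t X^\top$; in the overparameterized lazy regime $K_t \approx K_0$, so $r_t = e^{-K_0 t} r_0$ and $\int_0^\infty r_t\,dt = K_0^{-1}r_0$. \emph{(iii) Backbone endpoint.} Integrating $\dot\vB$ under these lazy dynamics gives $\vB_T - \vB_0 = -\vv_\tau\, r_0^\top K_0^{-1} X$, and evaluating at a test point $\vx$ yields $\vz_T = \vz_0 + c(\tau)\,\vv_\tau$ with $c(\tau) := -r_0^\top K_0^{-1} X\vx$.

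With the identity in hand, the four quantities read $\|\vz_T-\vz_0\|^2 = c(\tau)^2\|\vv_\tau\|^2$, $\|\vz_T\|^2 = \|\vz_0\|^2 + 2c(\tau)\,\vz_0^\top\vv_\tau + c(\tau)^2\|\vv_\tau\|^2$, $\vz_T^\top\vz_0 = \|\vz_0\|^2 + c(\tau)\,\vz_0^\top\vv_\tau$, and $\cos(\vz_T,\vz_0) = \vz_T^\top\vz_0/(\|\vz_0\|\,\|\vz_T\|)$. From Step~(i), $c(\tau)$ decays roughly exponentially in $\tau$ while $\|\vv_\tau\|^2$ stays close to its initialization-scale value and $\vz_0^\top\vv_\tau = \vx^\top\vB_0^\top\vv_\tau$ is precisely the HP prediction on $\vx$, which grows from the random initial prediction toward the least-squares prediction. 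Hence $\|\vz_T-\vz_0\|^2$ is $c(\tau)^2$ times a slowly varying factor and is monotone decreasing, whereas $\|\vz_T\|^2$ and $\vz_T^\top\vz_0$ both contain the cross term $c(\tau)\,\vz_0^\top\vv_\tau$, a product of a factor shrinking in $\tau$ and a factor growing in $\tau$, which first rises and then falls---the advertised quadratic (``first increase then decrease'') trend. The cosine is a ratio of such expressions and does not admit a clean monotone characterization, but it necessarily exhibits a fast-growth phase while the numerator rises faster than the denominator.

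The main obstacle is the legitimacy of the lazy approximation in Step~(ii): the quadratic trend is delicate and could in principle be washed out by higher-order corrections to the NTK. I would address this either by enforcing the standard overparameterization and small-step-size scaling under which the NTK regime is exact in the limit, or by exploiting the exact gradient-flow conservation law $\|\vB_t\|_F^2 - \|\vv_t\|^2 = \|\vB_0\|_F^2 - \|\vv_\tau\|^2$ in this two-layer linear model to bound the deviation of $K_t$ from $K_0$ throughout finetuning, and then verifying that this perturbation preserves the sign structure of the polynomial in $c(\tau)$.
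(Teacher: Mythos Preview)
Your approach is broadly correct and shares the same starting point as the paper: both linearize the finetuning dynamics (NTK/lazy regime) to get a closed form for $\vB_T-\vB_0$. From there the two diverge. The paper parameterizes everything by the pre-FT prediction vector $\vq_0=X\vB_0^\top\vv_\tau$, passes to averaged quantities $\bar d_{\mathrm{euc}},\bar d_{\mathrm{dot}},\|\vz_T\|^2$ via trace bounds, and then uses the identity $(\nabla_{\vb}\vq_0)\,\vb_0=\vq_0$ (their Lemma~1, essentially Euler's identity for a bilinear map) to reduce $\mathrm{tr}(\vB_0^\top\vB_T)$ and $\mathrm{tr}(\vB_T^\top\vB_T)$ to explicit quadratics in $\vq_0$. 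That lets them read off concrete critical points: $\vq_0^\ast=Y$ minimizes $\bar d_{\mathrm{euc}}$, $\vq_0^\ast=\tfrac12 Y$ maximizes $\vz_T^\top\vz_0$, and $\vq_0^\ast=\alpha Y$ with $\alpha\in(0,\tfrac12)$ maximizes $\|\vz_T\|^2$. The ``quadratic trend'' in the proposition is then literal: as HP drives $\vq_0$ from roughly $0$ to $Y$, these quadratics peak at interior points.

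Your route---the per-sample rank-one identity $\vz_T=\vz_0+c(\tau)\vv_\tau$---is structurally cleaner and makes the mechanism very transparent, but your justification of the quadratic trend has a gap. You argue that $c(\tau)\,\vz_0^\top\vv_\tau$ is a product of a shrinking factor and a growing factor and therefore ``first rises and then falls''; this inference is not valid in general (e.g.\ $e^{-2t}\cdot e^{t}$ is monotone). What actually makes the argument go through is that both $c(\tau)=-(\vq_0-Y)^\top K_0^{-1}X\vx$ and $\vz_0^\top\vv_\tau=\vx^\top\vB_0^\top\vv_\tau$ are \emph{affine in $\vq_0$} (equivalently in $\vv_\tau$), so after averaging over $\vx$ their product is a genuine quadratic in $\vq_0$ whose interior maximum you can locate---exactly what the paper does. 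Likewise, for $\|\vz_T\|^2$ your expression $c(\tau)^2\|\vv_\tau\|^2$ is formally quartic in $\vv_\tau$; to recover the paper's quadratic you need to treat $\|\vv_\tau\|^2$ (equivalently the kernel $K_0$) as slowly varying, an approximation the paper also makes but states more explicitly. In short: your identity is a nice alternative derivation, but to close the argument you should replace the heuristic ``up times down'' with the observation that the cross terms are quadratic in $\vq_0$ and compute their critical points, which is precisely the content the paper's Lemma~1 supplies.
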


\begin{figure}[t]
    \centering
    \includegraphics[width=0.98\textwidth]{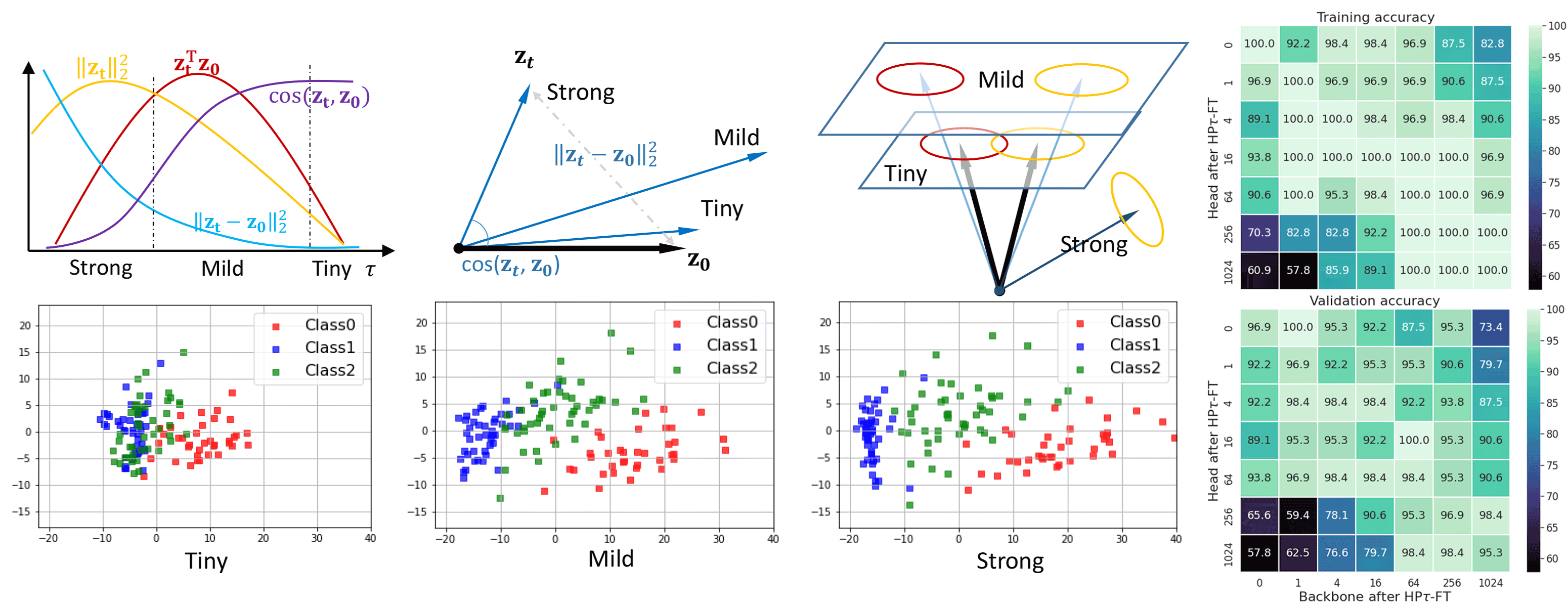}
    \caption{How $\vz_t$ changes from $\vz_0$. The ellipses in the third panel represent the scattered features.
            The scatter plots are the PCA projection of the resulting $\vz_t$ after finetuning. 
            For tiny, mild, and strong adaption, we use $\tau=1024, 64, 0$, respectively.
            The heat-maps of the last column are the ``head-exchange'' experiments (all under the Toy+MLP settings).}
    \label{fig:main_zchange}
\end{figure}

Such a trend is illustrated in \cref{fig:main_zchange}, supported by various experiments in \cref{append:experiment_change_of_z},
and strictly proved in \cref{append:linear}.
With this information,
we can infer the behavior of $\vz_T$ and sketch it in the second and the third panels.
In the ``tiny'' case, when $\vv_{0}^\tau$ is fully converged ($\tau\rightarrow\infty$),
the features are almost stable during finetuning;
this only works well when the pretrained features are perfect for the downstream task.
In the ``mild'' case, when $\tau$ is reasonably large,
the resulting $\vz_T$ will be stretched ($\|\vz_T\|_2$ increases) in a similar direction (cosine changes little).
This kind of $\vz$ can make the original overlapped features become more separable without changing the manifold of the features too much,
which is desirable in most transfer learning scenarios.
For the ``strong'' case,
where we only HP for a few updates or simply use a random head,
$\vz_T$ will change in an unpredictable way,
especially for the early updates.
Although the fine-tuned model may generalize well,
$\vz_T$ might be quite different from $\vz_0$.
Thus, if we believe the pretrained features are too specific to the pretraining dataset,
the ``strong'' case is a reasonable choice.

Note that even though the scatter plots of the mild and strong cases look similar,
the corresponding feature manifold might be quite different.
% This can be verified by the results in the following ``head-exchange'' experiment .
To verify this, we first run HP$\tau$-FT (i.e., load the pretrained model, prob the head for $\tau$ epochs, then finetune to converge) for 7 different $\tau$.
Then we save the converged backbone and task-head separately for each $\tau$,
and pair-wisely exchange the head and backbone to build 49 new models (without further tuning).
The training and validation accuracy of the 49 models are reported in \cref{fig:main_zchange}.
The results match well with our analysis:
1) the off-diagonal values are lower, which means the representations learned by experiments with different $\tau$ are different;
2) for the strong adapted backbones (the first two columns), the heads from other cases are not compatible,
which means the features' manifold are significantly changed;
3) for the mild and tiny cases (latter columns),
the aforementioned incompatibility almost disappears,
which means the features' manifold of these cases are quite similar.

% If we consider their PCA components (i.e., the first two columns of the SVD matrix) as the description of the manifold of their representation space,
% then we can use the change of this matrix to compare how much the manifold is changed.
% In this example, the gap between tiny and mild cases are much smaller (0.1338) than that between tiny and strong cases (0.5615),
% which means the features' manifold indeed changes more,
% as the third-top panel illustrated.

\subsection{Backbone Depth and Head Capacity}
\label{sec:3.3}
Beyond how long we train the task head,
the structure and capacity of the task head also influence the training prediction after this stage, thus the adaptation of $\vz_t$.
We briefly discuss the trend here and verify them in \cref{sec:experiments}.
When using a low-capacity head (e.g. a linear head parameterized by a $h\times k$ matrix),
it might be hard to preserve the pretrained $\vz_0$ even a very large $\tau$ is chosen,
as the head cannot achieve a high enough training accuracy to decrease AIE.
On the other hand,
if the capacity of the task head is much bigger than the backbone,
the information from the pretrained network might be easily distorted.
For example,
consider using only the first block of a pretrained ResNet18 as the backbone and concatenating the other 3 blocks with a 10-layer wide MLP.
If this huge task head is randomly initialized,
the information contained in the backbone could be completely rewritten,
as the random changing phase of $\vz_t$ can be very long
(remember that in \cref{eq:z_dynamics}, $\vz_t$ will change in random directions before $\nabla_{\vv}\vq_t$ becomes less noisy).

\section{Examples on Real Tasks}
\label{sec:experiments}

In this section,
we first provide a ``user guide'' of all the aforementioned methods.
Note that given the pretrained and downstream tasks,
determining the optimum adaptation energy is rather heuristic:
the user guide only provides some high-level suggestions for the practitioners.
We then provide abundant real examples of how to apply these principles.
Generally, we can first try sweeping the optimal $\tau$ using linear head and consider other advanced tricks if necessary.

\subsection{User Guide}
\label{sec:exp_userguide}

The goal of this paper is to provide a toolbox for preparing the task-head before finetuning.
Here, we suggest a ``\textit{phenomenon $\rightarrow$ hypothesis $\rightarrow$ solution}'' workflow and use the validation performance for verification.
Recall how we mitigate the overfitting problem using dropout: 
\textit{validation accuracy decrease after convergence $\rightarrow$ model overfitted $\rightarrow$ add dropout}.
Similarly, we can have: \textit{HP train-acc converge to 100\% $\rightarrow$ no enough energy $\rightarrow$ use smaller $\tau$}.

However, in practice, 
it is unknown that how much energy is beneficial,
as the neural network might not encode the input samples as we humans expected.
Hence, we suggest starting from the basic setting,
i.e. using a linear head and sweeping $\tau$ to get a high-level understanding of how the energy influence the downstream generalization.
Usually, selecting the optimal $\tau^*$ using validation accuracy can ensure a reasonably good test performance,
as verified in the next subsection.

The advanced tricks are only applicable to specific scenarios and need more consideration.
For example, if we really want tiny energy but using the linear head only achieves less than 50\% training accuracy after head probing,
we can consider an MLP head to increase the converged training accuracy (hence reduce the energy).
If we want a mild adaptation, but the training accuracy during HP goes to 100\% too fast,
using label smoothing during HP can be considered.
If we believe the downstream task only needs low-level features of the backbone,
partial-backbone is a good choice.
However, these advanced tricks also have side effects that are deleterious to the downstream performance.
We will analyze their advantage and limitations in \cref{sec:exp_tricks} with concrete examples.

\subsection{Basic Method: earlier stopping HP}
\label{sec:exp_es}

\begin{figure}[t]
    \centering
    \includegraphics[width=1\textwidth]{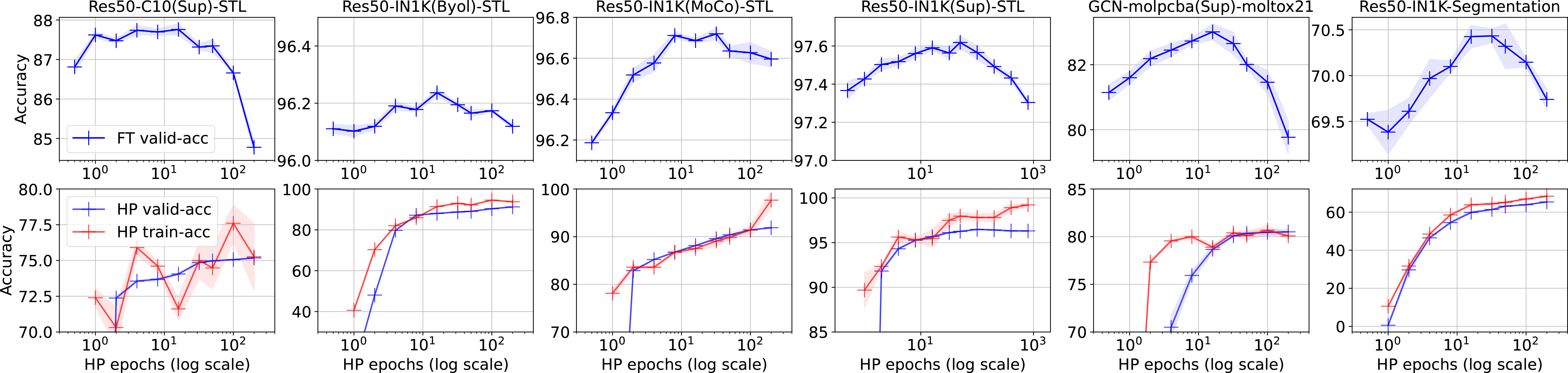}
    \caption{Sweeping $\tau$ from 0 to 200 (in $2^n$ fashion).
            The valid-accuracy of FT-only setting is the left most point in each panel.
            The first 4 columns are on image classification task, the fifth one is on graph task,
            and the last one is on image segmentation task.}
    \label{fig:mainexp_sweep}
\end{figure}

Following our theory, when the learning rate for HP is fixed,
the value of $\tau$ is positively correlated with the training accuracy,
thus negatively correlated with energy.
Hence, given a pretrained backbone and a downstream task,
we can always start from sweeping $\tau$ (see \cref{fig:mainexp_sweep}).

As stated in the motivation part, during finetuning,
we usually expect some feature adaptation while keeping some pretrained information.
Hence, neither $\tau=0$ nor $\tau=\infty$ is optimal.
We find such a trend is consistent across various settings:
from image input to graph input, from classification task to segmentation task, 
and from supervisory pre-training to unsupervised pre-training.

\begin{table}[t]
\centering
\resizebox{1\textwidth}{!}{
    \begin{tabular}{c|ccc|ccc|ccc|c}
    \hline
    downstream data: & \multicolumn{3}{c|}{Flowers} & \multicolumn{3}{c|}{STL10} & \multicolumn{3}{c|}{CIFAR100} & Tox21 \\ \hline
    pretraining task-data: & MoCo-IN & Sup-IN & Sup-C10 & MoCo-IN & Sup-IN & Sup-C10 & MoCo-IN & Sup-IN & Sup-C10 & Sup-pcba \\ \hline
    HP-train-acc & 59.711 & 85.826 & 7.031 & 91.889 & 96.291 & 75.181 & 56.397 & 62.709 & 11.735 & 79.121 \\
    HP0-FT & 76.953 & 91.295 & 60.714 & 96.136 & 97.452 & 86.811 & 80.432 & 84.736 & 65.608 & 81.274 \\
    HP200-FT & 84.882 & 91.653 & 43.973 & 96.363 & 97.374 & 84.778 & 83.744 & 84.347 & 63.992 & 79.666 \\ \hline
    HP$\tau^*$-FT & \textbf{86.831} & \textbf{92.299} & \textbf{63.711} & \textbf{96.753} & \textbf{97.697} & \textbf{87.739} & \textbf{83.814} & \textbf{84.971} & \textbf{65.966} & \textbf{83.263} \\
    lsHP200-FT & {\color[HTML]{000000} 86.946} & {\color[HTML]{FE0000} 92.746} & {\color[HTML]{4285F4} 42.299} & {\color[HTML]{000000} 96.639} & {\color[HTML]{EA4335} 97.959} & {\color[HTML]{4285F4} 85.912} & {\color[HTML]{4285F4} 83.412} & {\color[HTML]{EA4335} 85.357} & {\color[HTML]{4285F4} 65.244} & {\color[HTML]{EA4335} 83.853} \\ \hline
    \end{tabular}
    }
\caption{Downstream test accuracy across different settings. $\tau^*$ is selected based on validation accuracy in sweeping.
        The HP$\tau^*$ always bring some improvement.
        The lsHP can further improve when the HP-only method achieves a high accuracy (no enough energy for adaptation),
        but fails when the pretraining features are not suitable (in Sup-C10 case). See \cref{sec:exp_tricks} for more discussions.}
\label{tab:main_tab1}
\vskip -0.2in
\end{table}

Other than this general trend, the nuance of experiments under different settings also supports our analysis well.
Specifically, in the image classification experiments shown in \cref{fig:mainexp_sweep},
adapting a ResNet50 to STL10
behaves differently on different pretraining task.
In the first column, we see a large $\tau$ (small energy) hurts the downstream generalization performance,
because the features pretrained on CIFAR10 might be far from optimal for the downstream task.
In contrast, the features pretrained on ImageNet-1K (IN1K for short) all provide good results.
Among those IN1K pretrained models,
the model from a supervised classification task leads the best overall downstream performance,
but large $\tau$ is still harmful.
In other words, the features pretrained this way might be somewhat too specific to the pretraining dataset,
and hence mild adaptation is beneficial.
Regarding the unsupervised pretraining cases,
BYOL \citep{byol} is less sensitive to the choice of $\tau$,
while MoCo \citep{moco} behaves more similarly to the supervised case.

Another interesting finding is that the HP training accuracy at the optimal $\tau^*$ is usually smaller than the converged value:
we cannot select $\tau$ based on the standard early stopping criterion on HP train accuracy.
As the task head under $\tau^*$ usually has not converged on the pretraining dataset,
we call this method ``earlier stopping HP'' (HP$\tau^*$-FT for short).
As shown in \cref{tab:main_tab1}, HP$\tau^*$-FT can always bring improvements.

\subsection{Advanced Tricks: superiority and limitation}
\label{sec:exp_tricks}

\begin{table}[t]
\centering
\resizebox{1\textwidth}{!}{
    \begin{tabular}{cccc|cccccc}
    \hline
     &  & Sim-Real & Sup-Sketch &  & Sim-STL & Sup-STL &  & Sim-STL & Sup-STL \\ \hline
    \multicolumn{1}{c|}{HP-train-acc} & \multicolumn{1}{c|}{} & 92.676 & 78.213 & \multicolumn{1}{c|}{} & 96.875 & \multicolumn{1}{c|}{100} & \multicolumn{1}{c|}{} & {\color[HTML]{000000} 97.607} & {\color[HTML]{000000} 100} \\
    \multicolumn{1}{c|}{1-$\cos(\vz_T,\vz_0)$} & \multicolumn{1}{c|}{} & 0.1269 & 0.1422 & \multicolumn{1}{c|}{} & {\color[HTML]{000000} 0.0044} & \multicolumn{1}{c|}{{\color[HTML]{000000} 0.017}} & \multicolumn{1}{c|}{} & {\color[HTML]{000000} 0.0049} & {\color[HTML]{000000} 0.0256} \\
    \multicolumn{1}{c|}{$\|\vz_T-\vz_0\|_2^2$} & \multicolumn{1}{c|}{} & 5.247 & 4.752 & \multicolumn{1}{c|}{} & {\color[HTML]{000000} 1.792} & \multicolumn{1}{c|}{{\color[HTML]{000000} 6.716}} & \multicolumn{1}{c|}{} & {\color[HTML]{000000} 1.731} & {\color[HTML]{000000} 7.391} \\
    \multicolumn{1}{c|}{FT-val-acc} & \multicolumn{1}{c|}{\multirow{-4}{*}{\begin{tabular}[c]{@{}c@{}}Baseline\\ Linear-head\end{tabular}}} & 78.075 & 61.545 & \multicolumn{1}{c|}{\multirow{-4}{*}{\begin{tabular}[c]{@{}c@{}}Baseline\\ Small energy\\ $\eta_{FT}=1$\\ $\eta_{HP}=1$\end{tabular}}} & {\color[HTML]{333333} 93.914} & \multicolumn{1}{c|}{{\color[HTML]{333333} 97.581}} & \multicolumn{1}{c|}{\multirow{-4}{*}{\begin{tabular}[c]{@{}c@{}}Small energy\\ $\eta_{FT}=0.9$\\ $\eta_{HP}=0.9$\end{tabular}}} & {\color[HTML]{000000} 94.015} & {\color[HTML]{000000} 97.694} \\ \hline
    \multicolumn{1}{c|}{HP-train-acc} & \multicolumn{1}{c|}{} & {\color[HTML]{EA4335} 99.121} & {\color[HTML]{EA4335} 94.883} & \multicolumn{1}{c|}{} & 96.875 & \multicolumn{1}{c|}{100} & \multicolumn{1}{c|}{} & 97.982 & 100 \\
    \multicolumn{1}{c|}{1-$\cos(\vz_T,\vz_0)$} & \multicolumn{1}{c|}{} & {\color[HTML]{4285F4} 0.1098} & {\color[HTML]{4285F4} 0.1104} & \multicolumn{1}{c|}{} & {\color[HTML]{EA4335} 0.014} & \multicolumn{1}{c|}{{\color[HTML]{EA4335} 0.0549}} & \multicolumn{1}{c|}{} & {\color[HTML]{EA4335} 0.0109} & {\color[HTML]{EA4335} 0.0849} \\
    \multicolumn{1}{c|}{$\|\vz_T-\vz_0\|_2^2$} & \multicolumn{1}{c|}{} & {\color[HTML]{4285F4} 4.942} & {\color[HTML]{4285F4} 4.206} & \multicolumn{1}{c|}{} & {\color[HTML]{EA4335} 1.841} & \multicolumn{1}{c|}{{\color[HTML]{EA4335} 9.278}} & \multicolumn{1}{c|}{} & {\color[HTML]{EA4335} 3.041} & {\color[HTML]{EA4335} 13.881} \\
    \multicolumn{1}{c|}{FT-val-acc} & \multicolumn{1}{c|}{\multirow{-4}{*}{2MLP-head}} & {\color[HTML]{EA4335} 78.453} & {\color[HTML]{EA4335} 63.773} & \multicolumn{1}{c|}{\multirow{-4}{*}{\begin{tabular}[c]{@{}c@{}}More energy\\ $\eta_{FT}=1$\\ $\eta_{HP}=0.9$\end{tabular}}} & {\color[HTML]{EA4335} 94.304} & \multicolumn{1}{c|}{{\color[HTML]{EA4335} 97.92}} & \multicolumn{1}{c|}{\multirow{-4}{*}{\begin{tabular}[c]{@{}c@{}}Opposite Energy\\ $\eta_{FT}=0.9$\\ $\eta_{HP}=1$\end{tabular}}} & {\color[HTML]{4285F4} 93.208} & {\color[HTML]{4285F4} 97.039} \\ \hline
    \end{tabular}
    }
\caption{How ls-HP and larger head influence the feature adaptation.
         The models are pretrained using SimCLR (Sim) or supervised (Sup) classification on IN1K.
         Number in blue (red) represent an decrease (increase) compared with its counterpart in the baseline.
         See more results in \cref{append:experiment_change_of_z}.}
\label{tab:main_tab2}
\vskip -0.2in
\end{table}

\textbf{MLP Head:}
Instead of using a linear head, authors of \citet{chen2020improved} claim that using an MLP head can sometimes bring improvement.
Following the analysis of this paper,
we can consider this trick when we want small energy while the linear head cannot reach a high HP training accuracy even after a long HP.
For example, in the first two columns in \cref{tab:main_tab2},
the HP-train-acc in the linear head cases plateaued after reaching 92\% or 78\%,
while a 2-layer MLP can reach 99\% and 95\%.
As the energy decreases,
features adapt less during finetuning (see the decreased distance metrics) and the final models generalize better.

However, we should be careful when applying this trick if we want to use a small $\tau$ (i.e., large energy).
Recall our analysis that the inconsistent direction term (i.e., $\nabla_{\vz}\vq_t$ defined in \cref{eq:z_dynamics}) 
makes the feature adaptation more unpredictable at the beginning of finetuning.
Increasing the head capacity in this case would make the head converge slower and hence prolong such a chaos phase.

\textbf{Label Smoothing HP:}
Recall that the energy is upper bounded by $\|\ve_y-\vp_0\|_2^2$,
where $\vp_0$ is the model's prediction after HP for $\tau$ epochs.
$\vp_0$ converges to the labels used in HP when $\tau\rightarrow\infty$.
Hence, instead of changing $\tau$,
we can also manipulate the labels in HP to achieve a similar goal.
One simple yet effective way is label smoothing \citep[e.g.]{muller2019does}.
By setting the labels during HP as $\eta_{HP}\ve_y+(1-\eta_{HP})*\vu$, where $\vu$ is a uniform K-class categorical distribution,
the HP stage can always reserve at least $(1-\eta_{HP})*\|\ve_y-\vu\|_2$ energy for the following feature adaptation,
even $\tau\rightarrow\infty$.
Such a trick (lsHP for short) is quite helpful when the HP-train-acc converges to 90\%+ very fast, yet we still want a mild adaptation,
like the example shown in the second two columns in \cref{tab:main_tab2}.
With lsHP, we see that the features adapt more even the HP-train-accs are unchanged.

To verify that the aforementioned improvement comes from the reserved energy during lsHP,
we further try using smoothed labels during finetuning (e.g., $\eta_{FT}=0.9$).
The results match our analysis:
when $\eta_{HP}=\eta_{FT}=0.9$,
the reserved energy disappears, as the labels of the two phases are the same again.
Hence, all the numbers under this condition are quite similar to the baseline case ($\eta_{HP}=\eta_{FT}=1$).
For the ``opposite energy'' case,
we observe a larger adaptation but a worse generalization performance.
That is because the reserved energy make the features adapt in opposite directions.
These results remind us that if we decide to use smooth label in both finetuning and head probing
(e.g., we assume most of the samples in the downstream dataset are ambiguous),
we need to set $\eta_{HP}\leq\eta_{FT}$ to ensure a correct direction.

In summary, the lsHP trick is suitable for scenarios where the pretrained features are pretty well and the standard HP converges to 90\%+ very fast.
When the HP-train-acc is too low,
the assumption used in HP, i.e. $\vp_0$ converges to the labels, no longer holds.
Hence, lsHP does not always bring enhancement, an example is given in the last row in \cref{tab:main_tab1}.

\begin{wraptable}{r}{5.5cm}
\resizebox{0.4\textwidth}{!}{
    \begin{tabular}{cccc}
    \hline
     & Real & Sketch & Quick \\ \hline
    \multicolumn{1}{c|}{HP-train-acc} & 92.676 & 78.213 & 64.307 \\
    \multicolumn{1}{c|}{-L4.3} & 100 & 100 & 97.666 \\
    \multicolumn{1}{c|}{-L4.2, -L4.1} & 100 & 100 & 100 \\ \hline
    \multicolumn{1}{c|}{FT-val-acc} & 78.075 & 61.545 & 59.703 \\
    \multicolumn{1}{c|}{-L4.3} & \textbf{78.528} & \textbf{65.683} & 67.011 \\
    \multicolumn{1}{c|}{-L4.2} & 78.427 & 65.336 & 67.087 \\
    \multicolumn{1}{c|}{-L4.1} & 76.689 & 65.017 & \textbf{68.07} \\ \hline
    \end{tabular}
}
\caption{ResNet50 pretrained on IN1K using SimCLR. 
         -L4.x means reinitializing the blocks in the resnet (merge them to task head) until layer 4.x.}
\label{tab:main_tab3}
\end{wraptable}

\textbf{Partial Backbone:}

This is a more intricate trick that requires a stronger and more heuristic hypothesis.
For example, there is a common belief in the deep-learning community that 
the lower layers extract fundamental features (e.g. edges, curves, and texture in vision tasks) 
while the higher layers learn more semantic features (e.g. dog heads or wheels) \citep{baldock2021deep,olah2020zoom}.
Hence, if we believe the downstream task treats the low-level features as beneficial while the high-level features are harmful,
re-initializing the higher layers (like the fortuitous forgetting mentioned by \citet{zhou2022fortuitous}) 
and incorporating them as part of the task head can be beneficial.

See the results in \cref{tab:main_tab3},
where the Quick\footnote{Refer to \cref{fig:dom_dataset} to get an intuition about what the samples in theses dataset look like.}
dataset is likely to rely more on the low-level features learned during pretraining.
So the optimal setting for the Quick column is removing information from the last three layers (i.e., L4.3, L4.2, and L4.1) in ResNet50,
while the optimal setting for the other two cases is to reinitialize only the last layer (i.e., L4.3).
Using this trick, the task-head capacity might increase significantly (see the HP-train-acc increase to 100\% after reinitializing).
Hence, the principles discussed in the MLP-head trick also hold here.
However, as it is hard to figure out what are the beneficial features for downstream tasks,
such a trick has the lowest priority in our toolbox.

\section{Related Work and Discussions}
\label{sec:related_work}

\textbf{HP, FT and HP-FT.}
Head probing and fine tuning are two fundamental algorithms in transfer learning,
which have also attracted much discussion and debate.
Intuitively, by freezing the pretrained backbone,
HP will re-combine the features without distorting them, and hence yields better performance than FT when features are perfect \citep{peters2019tune}.
However, when pretrain and downstream tasks are very different,
adapting the features is important and FT outperforms HP \citep{chen2020improved,zhai2019large,he2022masked}.
Combining the strengths of HP and FT,
authors of \citet{kumar2022fine} demonstrate that HP-FT (i.e., first HP, then FT) 
yields the best performance on both in-distribution and out-of-distribution cases.
Although they provide some theoretical guarantees for the superiority of HP-FT,
some of their assumptions are dubious.
To shed more lights on the pros and cons of these two methods,
this paper analyze in detail \emph{how the features change} under different task-head settings.
Specifically,
by controlling the HP epochs $\tau$ or otherwise influencing the energy term,
we can design different types of feature adaptations for different downstream tasks.
The relationship between the choice of task head and resulting $\vz$'s adaptation is explained using \cref{eq:z_dynamics}, 
verified by various experiments,
and proved in an overparameterized linear model.
Although the paper only analyze these fundamental settings,
we believe the analysis provided here can also be combined with (or explain the benefits of) other more complex finetuning mechanisms, 
like those of \citet{guo2019spottune, zhang2020side, aghajanyan2020better, howard2018universal}.
For example, a common practice in transfer learning is to use a small learning rate for the backbone and a large learning rate for task head.
From our perspective,
this method can weaken the influence of the ``noisy direction'' term and make the features adapt in a low energy condition.

\textbf{Backbone and head depth.}
Besides the influence of the initial value of the head,
this paper also discusses the influence of the relative capacity between backbone and head. 
% Although we use the term `linear' probing,
% the task-head is not necessarily linear.
\citet{chen2020improved} show that a simple MLP head sometimes also brings enhancement.
When the backbone only copies the early layers of the pretrained network,
as the downstream task might need different levels of features \citep{olah2020zoom, baldock2021deep,zhang2020revisiting},
the later layers and the original head can be combined as the new complex task-head, like the example in \cref{fig:sys_model}.
Furthermore, in some encoder-decoder style models,
%like the one used in emergent language \citep{Ren2020CompositionalLE} and 
like some the models for some language tasks \citep{peters2019tune,zhu2019incorporating},
the head (decoder) might have comparable capacity with the backbone (encoder).
In another extreme case,
if we use pretrained word2vec features \citep{word2vec} and plan to FT them in a downstream task,
the task head is the whole network, which is much bigger than the embedding layer.
We believe the discussion in this paper might also help inspire the design of HP-FT strategy for these practical scenarios.

% \textbf{Forget-relearn and iterated learning.}
% Although the analysis in this paper is under a pretraining fine-tuning framework,
% the analysis can be extended to other feature (or representation) evolving scenarios.
% For example, \citet{zhou2022fortuitous} and \citet{nikishin2022primacy} reset and re-train last several layers 
% to avoid overfitting in classification and reinforcement learning cases.
% \citet{Ren2020CompositionalLE}, \citet{lu2020countering} and \citet{vani2020iterated} propose to reset and re-train the decoder to
% improve the task-specific representations.
% If we treat the encoder and decoder as the backbone and task-head respectively,
% we can also explain some interesting facts in these fields.

\vskip -0.1in
\section{Conclusion}
\label{sec:conclusion}
This paper studies how the choice of task heads influence the pretrained features $\vz$'s adaptation and hence influence the downstream performance. 
By decomposing the learning dynamics of $\vz$, 
we find the keys are the energy and direction terms, 
which highly correlate with the accuracy at the beginning of the FT process 
(i.e., the accuracy after HP). 
Hence under a common HP-FT framework, 
we carefully analyze the relationship between the HP epochs ($\tau$) and the features' adaptation (discrepancy between $\vz_T$ and $\vz_0$). 
Different from most existing works, 
which mainly focus on Euclidean or cosine distance, 
we further analyze $\vz_T^\top\vz_0$ and $\|\vz_0\|_2^2$ to depict a more comprehensive relationship between $\tau$ and $\vz_T$, 
like the examples in \cref{fig:main_zchange}. 
This non-trivial trend is strictly proved in an overparameterized linear model and experimentally verified under different practical settings.

Based on these observations, 
we speculate that a suitable adaptation of $\vz_T$ is beneficial when the pretrained features are not perfect for the downstream tasks. 
Under different experimental settings, 
we illustrate how to achieve better downstream performance by controlling the adaptations of $\vz_T$
(using early-stopping HP, label smoothing during HP, or manipulating the head's capacity).

Finally, there are still many open questions and phenomena not address by this study. 
For example, methods to quantitatively or even adaptively analyze the discrepancy between the pretrain and downstream tasks would be very useful in knowing what kind of head probing to perform in advance.
However, we believe the methods proposed in this paper can help provide a new perspective on understanding feature adaptation,
which we hope will aid future work in this area.
% We hope our analysis can inspire future studies in the related fields.

\clearpage
\bibliographystyle{iclr2023_conference}
\bibliography{refs}

\clearpage
\appendix

\begin{figure}[t]
    \centering
    \includegraphics[width=1\textwidth]{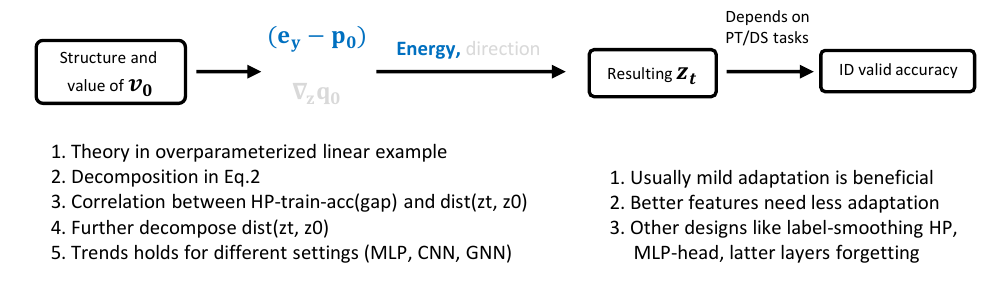}
    \caption{The flow of this paper. Code is available at \url{https://github.com/Joshua-Ren/how_to_prepare_taskhead}.}
    \label{fig:app_general_idea}
\end{figure}

\section{Decomposition of change of Z}
\label{append:tchange}

\textbf{Proof of \cref{eq:z_dynamics}:}

Recall $\vz=f_{\vB}(\vx)$, $\vq=g_{\vz}(\vz)$ and $\vp=\text{Softmax}(\vq)$.
Use $\vb\in\R^{hd\times 1}$ to represent the vector form of $\vB\in\R^{h\times d}$.
The MSE loss on one sample is $\loss_{mse}\left(\vq^{(n)},\ve_{y_n}\right)=\frac{1}{2}\|\vq^{(n)}-\ve_{y_n}\|_2^2$.
The cross-entropy loss on one sample is $\loss_{ce}\left(\vq^{(n)},\ve_{y_n}\right)=\mathcal{H}\left(\ve_{y_n},\vq^{(n)}\right)$,
where $\mathcal{H}(\vp,\vq)=-\sum_k p_k \log q_k$ is the cross entropy for categorical distribution.
Using 1st Taylor expansion, we have:
\begin{equation}
    \underbrace{ \vz^{(0)}_{t+1} - \vz^{(0)}_{t}}_{h\times 1} =
    \underbrace{ \nabla_{\vb}\vz^{(0)}_t }_{h\times hd} \cdot
    \underbrace{ (\vb_{t+1}-\vb_t) }_{hd\times 1} +
    \mathcal{O}\left( \|\vb_{t+1}-\vb_t\|^2 \right).
    \label{eq:app_z_dynamics}
\end{equation}

We then calculate $\vb_{t+1}-\vb_t$ assuming the parameters are updated in batch-SGD:
\begin{align}
    \underbrace{\vb_{t+1}-\vb_t}_{hd\times 1} &=
    -\frac{\gamma}{N}a\sum_{n=1}^{N} \left(\underbrace{ \nabla_{\vb}\loss\left( \vq_t^{(n)},\ve_{y_n} \right)}_{1\times hd}\right)^\top \\
    &= -\frac{\gamma}{N}\sum_{n=1}^{N} 
        \left( 
            \underbrace{\nabla_{\vq}\loss\left( \vq_t^{(n)},\ve_{y_n} \right)}_{1\times k} \cdot
            \underbrace{\nabla_{\vz}\vq_t^{(n)}}_{k\times h}
            \underbrace{\nabla_{\vb}\vz_t^{(n)}}_{h\times hd}
        \right)^\top\\
    &= -\frac{\gamma}{N}\sum_{n=1}^{N} 
        \underbrace{ \left(\nabla_{\vb}\vz_t^{(n)}\right)^\top}_{hd\times h}
        \underbrace{ \left(\nabla_{\vz}\vq_t^{(n)}\right)^\top}_{h\times k} 
        \underbrace{ \left(\nabla_{\vq}\loss\left( \vq_t^{(n)},\ve_{y_n}\right) \right)^\top}_{k\times 1}
\end{align}
For different loss functions,
we have different expressions for the last term:
\begin{equation}
    \nabla_{\vq}\loss_{mse}\left( \vq_t^{(n)},\ve_{y_n}\right) = 
            \left( \vq_t^{(n)} - \ve_{y_n} \right); \quad
    \nabla_{\vq}\loss_{ce}\left( \vq_t^{(n)},\ve_{y_n}\right) = 
            \left( \vp_t^{(n)} - \ve_{y_n} \right),
\end{equation}
where $\vq$ is the logits and $\vp=\text{Softmax}(\vq)$ is the predicting probability.
We see these two kinds of loss have similar expression on this term,
so we only explain the cross-entropy version (as it is more common in practices) in the main content.

Using the above expressions,
we can first bound the high-order term (cross-entropy version):
\begin{equation}
    \mathcal{O}(\|\vb_{t+1}-\vb_t\|^2) = 
    \mathcal{O}( \gamma^2 \|\nabla_{\vz}\vq_t^{(n)}\|_{op}^2 \cdot \|\nabla_{\vb}\vz_t^{(n)}\|_{op}^2 \cdot \|\vq_t^{(n)} - \ve_{y_n}\|_{op}^2)=
    \mathcal{O}(\gamma^2),
\end{equation}
as long as the hyperparameters are appropriately chosen and the loss doesn't blow up (or gradient clipping is applied) in FT stage.

Finally, combining all the expressions,
\cref{eq:app_z_dynamics} can be rewritten as:
\begin{align}
    \underbrace{ \vz^{(0)}_{t+1} - \vz^{(0)}_{t}}_{h\times 1} 
        &= -\frac{\gamma}{N}\sum_{n=1}^{N} \underbrace{ \left(\nabla_{\vb}\vz^{(0)}_t \right)}_{h\times hd} 
        \underbrace{ \left(\nabla_{\vb}\vz^{(0)}_t \right)^\top}_{hd\times h}
        \underbrace{ \left(\nabla_{\vz}\vq_t^{(n)}\right)^\top}_{h\times k} 
        \underbrace{ \left(\nabla_{\vq}\loss_{ce}\left( \vq_t^{(n)},\ve_{y_n}\right) \right)^\top}_{k\times 1} + \mathcal{O}(\gamma^2)\\
        &= \frac{\gamma}{N}\sum_{n=1}^{N} \underbrace{\kappa^{(0,n)}_t}_{h\times h} \cdot
        \underbrace{ \left(\nabla_{\vz}\vq_t^{(n)}\right)^\top}_{h\times k} 
        \underbrace{\left( \ve_{y_n} - \vp_t^{(n)} \right)}_{k\times 1} + \mathcal{O}(\gamma^2),\label{eq:app_z_dynamics}
\end{align}
which is the same with \cref{eq:z_dynamics} in the main context.

\textbf{Proof of Proposition 1:}

It is easy to get $\vz_T^{(j)}-\vz_0^{(j)}$ by stacking the LHS of \cref{eq:z_dynamics} (or \cref{eq:app_z_dynamics}) under different $t$:

\begin{align}
    \vz_T^{(j)}-\vz_0^{(j)} 
    & =  \sum_{t=0}^{T-1} \frac{\gamma}{N} \sum_{n=1}^N \left( 
                            \kappa^{(j,n)}_t \cdot \left(\nabla_{\vz}\vq_t^{(n)}\right)^\top \cdot \left(\ve_{y_n}-\vp_t^{(n)}\right) 
         \right)+\mathcal{O}(\gamma^2)\\
    & = \gamma\sum_{t=0}^{T-1} \mathbb{E}_{\vx^{(n)}}\left[ 
                            \kappa^{(j,n)}_t \cdot \left(\nabla_{\vz}\vq_t^{(n)}\right)^\top \cdot \left(\ve_{y_n}-\vp_t^{(n)}\right) 
         \right]+\mathcal{O}(\gamma^2)\\
    & = \gamma\cdot\mathbb{E}_{\vx^{(n)}}\left[
                            \kappa^{(j,n)} \cdot \sum_{t=0}^{T-1} \left(\nabla_{\vz}\vq_t^{(n)}\right)^\top \cdot \left(\ve_{y_n}-\vp_t^{(n)}\right) 
        \right]+\mathcal{O}(\gamma^2)\\
    & = \gamma\cdot\mathbb{E}_{\vx^{(n)}}\left[
                            \kappa^{(j,n)} \cdot \sum_{t=0}^{T-1} \vv_t^\top \cdot \left(\ve_{y_n}-\vp_t^{(n)}\right) 
        \right]+\mathcal{O}(\gamma^2),
\end{align}

where the first equation follows definition.
The second equation is assuming a uniform training sample distribution, i.e., $p(\vx)=\frac{1}{N}$.
The third equation follows the slow-change NTK assumption, hence $\kappa^{(j,n)}$ no longer depends on $t$.
The last equation follows the linear-head assumption, i.e., $\nabla_{\vz}\vq_t^{(n)}=\vv_t$.

We cannot go further as there are three matrices (or vectors) in the expectation.
Hence we instead analyze the F-norm (i.e., L2-norm for the vector $\vz$) of the features' change.

\begin{align}
    \|\vz_T^{(j)}-\vz_0^{(j)} \|_2
    & = \gamma\cdot \left\lVert\mathbb{E}_{\vx^{(n)}}\left[
                            \kappa^{(j,n)} \cdot \sum_{t=0}^{T-1} \vv_t^\top \cdot \left(\ve_{y_n}-\vp_t^{(n)}\right) 
        \right] \right\rVert_2+\mathcal{O}(\gamma^2)\\
    & \leq \gamma\cdot \mathbb{E}_{\vx^{(n)}} \left\lVert
                            \kappa^{(j,n)} \cdot \sum_{t=0}^{T-1} \vv_t^\top \cdot \left(\ve_{y_n}-\vp_t^{(n)}\right) 
        \right\rVert_2+\mathcal{O}(\gamma^2)\\
    & \leq \gamma\cdot \mathbb{E}_{\vx^{(n)}} \left\lVert
                            \kappa^{(j,n)}\right\rVert_2  \cdot  \left\lVert\sum_{t=0}^{T-1} \vv_t^\top \cdot \left(\ve_{y_n}-\vp_t^{(n)}\right) 
        \right\rVert_2+\mathcal{O}(\gamma^2)\\
    & \leq \gamma\cdot C_1 \cdot  \mathbb{E}_{\vx^{(n)}}\left\lVert\sum_{t=0}^{T-1} \vv_t^\top \cdot \left(\ve_{y_n}-\vp_t^{(n)}\right) 
        \right\rVert_2+\mathcal{O}(\gamma^2)\\
    & =  \gamma\cdot C_1 \cdot \sum_{t=0}^{T-1} \mathbb{E}_{\vx^{(n)}}\left\lVert \vv_t^\top \cdot \left(\ve_{y_n}-\vp_t^{(n)}\right) 
        \right\rVert_2+\mathcal{O}(\gamma^2)\\
    & \leq \gamma\cdot C_1 \cdot \sum_{t=0}^{T-1} \mathbb{E}_{\vx^{(n)}}\left\lVert \vv_t^\top\right\rVert_2 \cdot \left\lVert\ve_{y_n}-\vp_t^{(n)}
        \right\rVert_2+\mathcal{O}(\gamma^2)\\
    & \leq \gamma\cdot C_1 \cdot C_2 \cdot \sum_{t=0}^{T-1} \mathbb{E}_{\vx^{(n)}}\left\lVert\ve_{y_n}-\vp_t^{(n)}
        \right\rVert_2+\mathcal{O}(\gamma^2)\\
    & \leq \gamma\cdot C_1 \cdot C_2 \cdot T \cdot \mathbb{E}_{\vx^{(n)}}\left\lVert\ve_{y_n}-\vp_0^{(n)}
        \right\rVert_2+\mathcal{O}(\gamma^2).       
\end{align}

Here the first equation is by definition.
The second inequality follows triangle inequality.
The third inequality follows Cauchy-Schwarz inequality.
The forth inequality is assuming the F-norm of NTK is bounded by $C_1$.
The fifth equation is sweeping the summation order.
The sixth inequality also follows Cauchy-Schwarz inequality.
The seventh inequality assumes the norm of the last layer is bounded by $C_2$.
The eighth inequality is assuming a stable learning process where the gap between $\ve_{y_n}$ and $\vp_0^{(n)}$ keeps decreasing during training.

Finally, by taking expectation of all the input samples,
we can have:
\begin{equation}
    \mathbb{E}_{\vx^{(j)}}\|\vz_T^{(j)}-\vz_0^{(j)} \|_2\leq c\cdot \mathbb{E}_{\vx^{(n)}}\lVert\ve_{y_n}-\vp_0^{(n)}\rVert,
\end{equation}
where $c=\gamma\cdot C_1 \cdot C_2 \cdot T$ is a constant and $\mathbb{E}_{\vx^{(n)}}\lVert\ve_{y_n}-\vp_0^{(n)}\rVert$ is the AIE term ($E_{aie})$ defined in Proposition 1.

\textbf{Verification of Slow Kernel Change Assumption:}

\begin{figure}[h]
    \centering
    \includegraphics[width=1.0\textwidth]{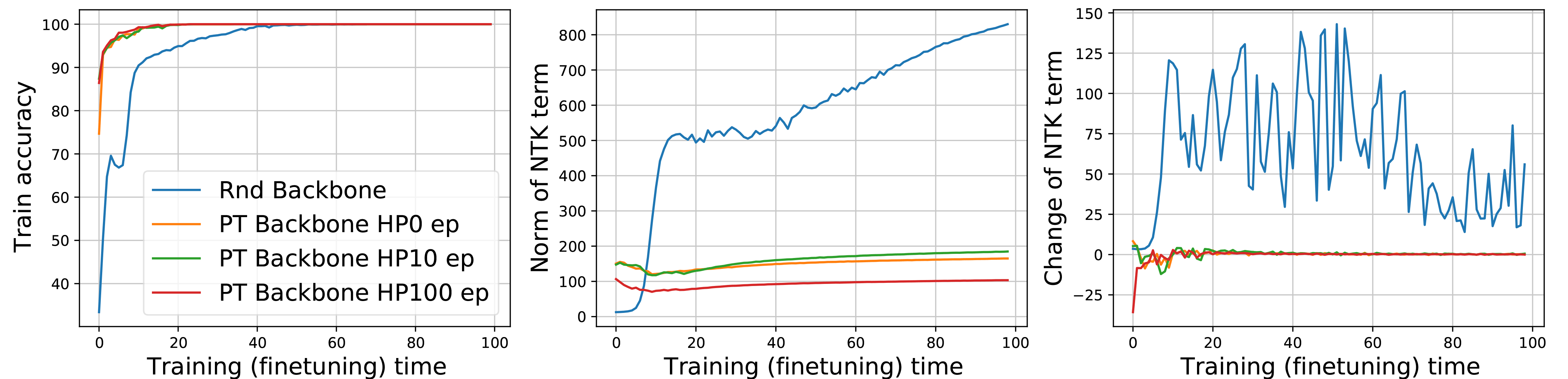}
    \caption{In finetuning, the NTK of the backbone adapts quite slow if the model is pretrained.}
    \label{fig:app_ntk_slowchange}
\end{figure}

We decompose the one-step dynamics of $\vz_t^{(j)}$ in \cref{eq:z_dynamics} into three parts,
which represent kernel, direction, and energy respectively.
In our analysis,
we have a mild assumption that the kernel of the backbone,
i.e., $\kappa^{(j,n)}_t=\left(\nabla_{\vB}\vz_t^{(j)}\right)\left(\nabla_{\vB}\vz_t^{(n)}\right)^\top \in \mathbb{R}^{h\times h}$,
changes slow during finetuning when the learning rate is small and the training is stable (the loss converges).
Although there are several works supporting this assumption \citep{yang2020feature, geiger2020disentangling},
to make the paper more self-contained,
we directly observe how this term adapts during pretraining in \cref{fig:app_ntk_slowchange}.

We measure the change of this term by calculating $k_{norm}=\|\kappa^{(j,n)}_t\|_F$ and $k_{gap}=\|\kappa^{(j,n)}_{t+1}-\kappa^{(j,n)}_t\|_F$ for each $t$ during finetuning.
As computing the empirical NTK on the whole dataset requires huge memory,
we randomly sample 50 $\vx$ as our ``probing samples'', computing $k_{norm}$ and $k_{gap}$ on 2,500 different $(\vx^{(j)},\vx^{(n)})$ pairs,
and then report their mean values at each $t$.
To verify our assumption,
we compare four different settings:
\begin{itemize}
    \item In Rnd Backbone, we randomly initialize the whole network and train it on the downstream dataset;
    \item In PT Backbone HP$\tau$ ep, we copy the pretrained backbone, attach a randomly initialized head, and then HP for $\tau$ epochs.
\end{itemize}
In the first panel in \cref{fig:app_ntk_slowchange},
we see the training accuracy of all these settings converge to 100\% (they also have similar validation accuracy).
In the second and third panel,
we plot the change of $k_{norm}$ and $k_{gap}$ respectively.
It is obvious that compared with using random backbone,
the NTK of a pretrained backbone indeed changes slow during the finetuning stage.

\section{Experimental Settings}
\label{append:experiment_setting}
\subsection{Toy case}
The experiments of the toy setting appear in \cref{fig:energy_direction} and \cref{fig:main_zchange}.
In the toy setting,
we first pretrain a 4+1 (4 layers of backbone and 1 layer of task head) layers MLP on the full MNIST dataset \citep{lecun1998mnist},
with learning rate $10^{-3}$, with cosine scheduler until convergence.
The hidden width is 128 and relu activation is applied for each layer in the backbone.
In the downstream task,
we only consider the first three classes and randomly select 1000 samples for each class (i.e., 3*1000 samples in total).
We also apply random rotation and center crop augmentations to simulate the distribution shift in downstream task,
like the right panel in \cref{fig:mnist_dataset}.
The downstream linear task head is then a 128*3 matrix (weights) and a 128*1 vector (bias).
To ensure a fine-grained observation of feature's adaptation,
we use a small constant learning rate ($10^{-4}$) in all the experiments.
The FT epoch is 50, and the HP epoch ranges from 0 to 50,000 
(50,000 is simulating the $\vv_{hp}^\infty$ case, usually train 0 to 1,024 epochs).

\begin{figure}[h]
    \begin{subfigure}{.45\linewidth}
    \centering
    \includegraphics[width=0.8\textwidth]{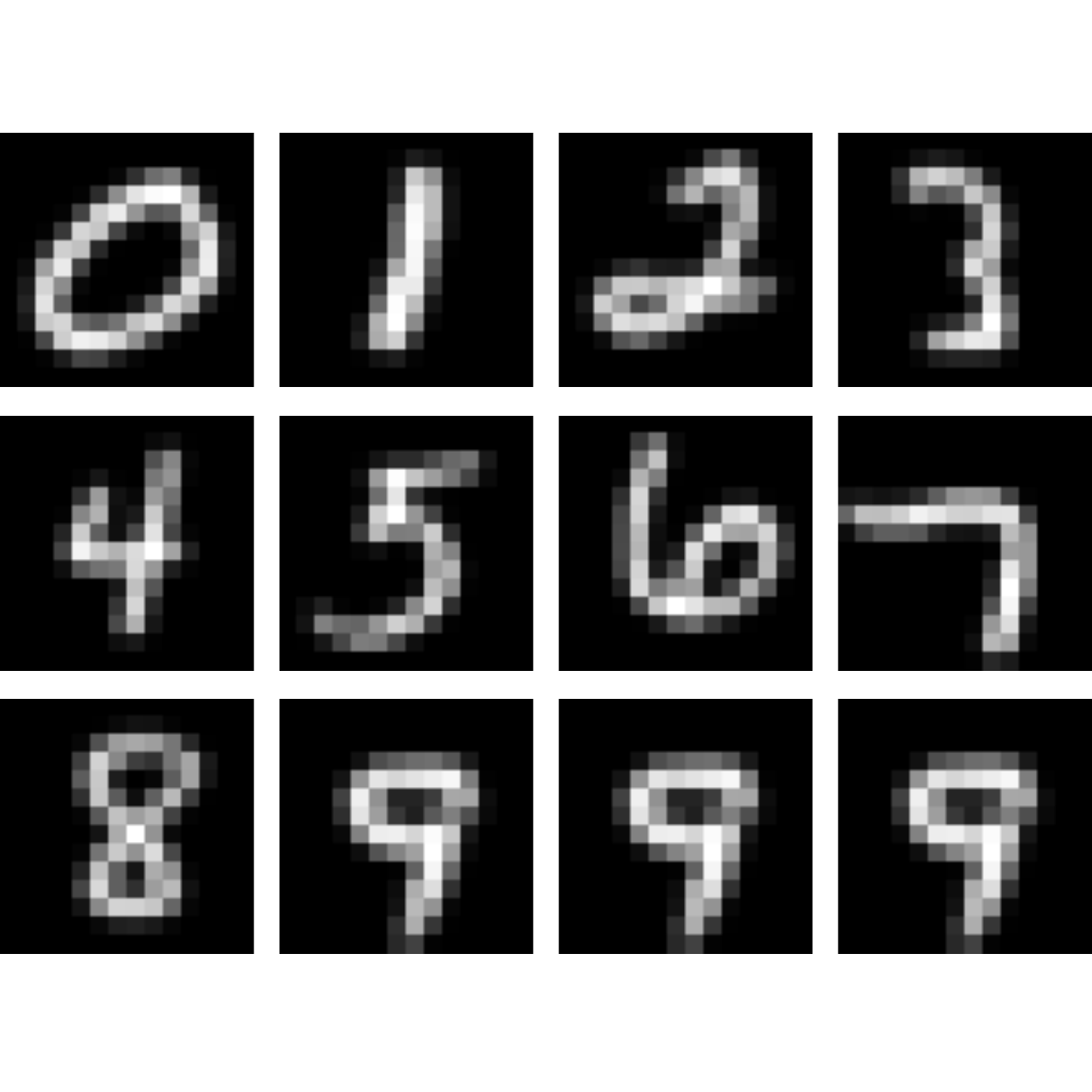}
    \caption{Pretrain dataset.}
    \end{subfigure}
    \begin{subfigure}{.45\linewidth}
    \centering
    \includegraphics[width=0.8\textwidth]{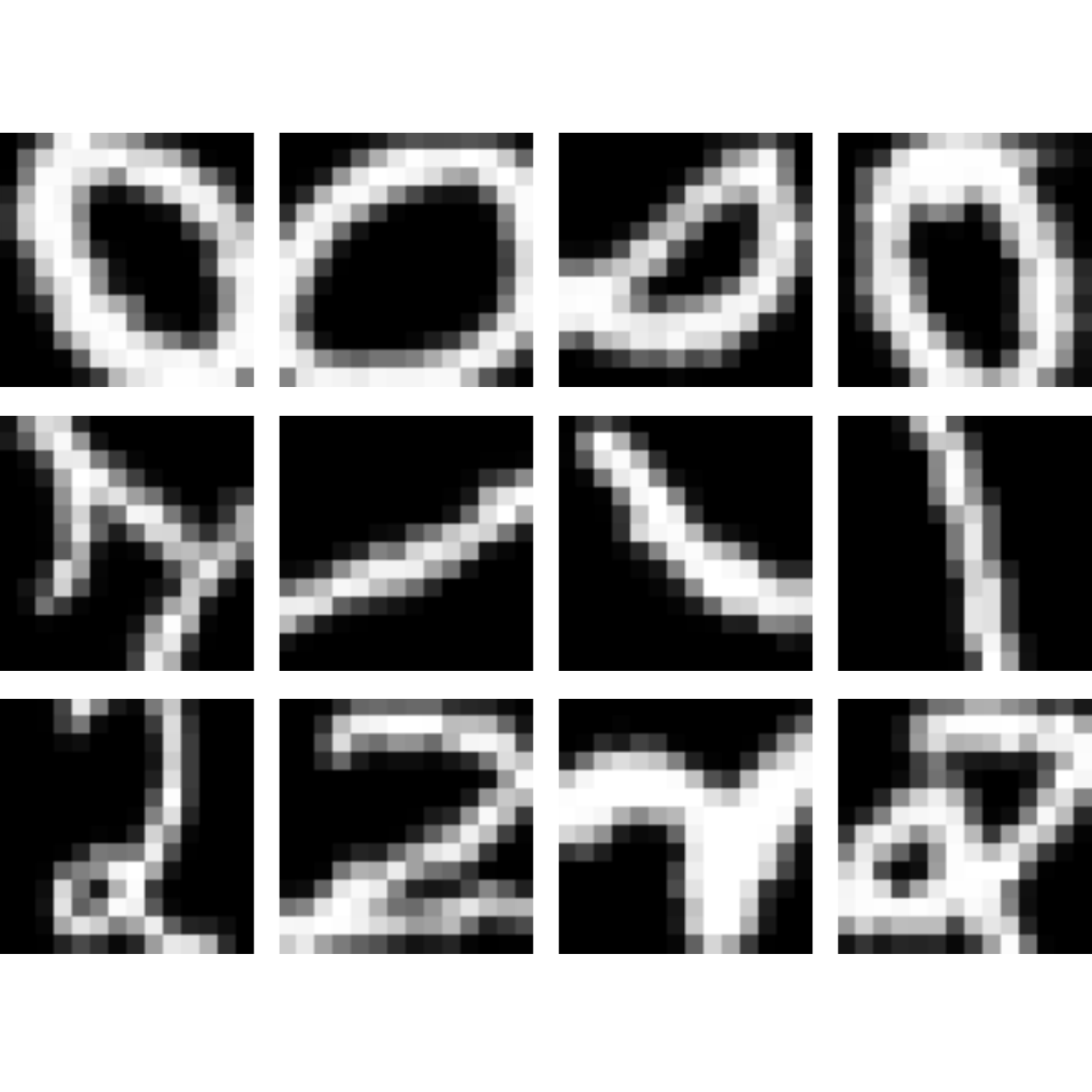}
    \caption{Downstream (HP/FT) dataset.}
    \end{subfigure}
\centering
\caption{Dataset for toy experiments. We use full MNIST dataset to pretrain the network.
        The downstream dataset is a subset of MNIST (only 0, 1 and 2) applying random rotation and zoom-in.}
\label{fig:mnist_dataset}
\end{figure}

\begin{figure}[h]
    \centering
    \includegraphics[width=1.0\textwidth]{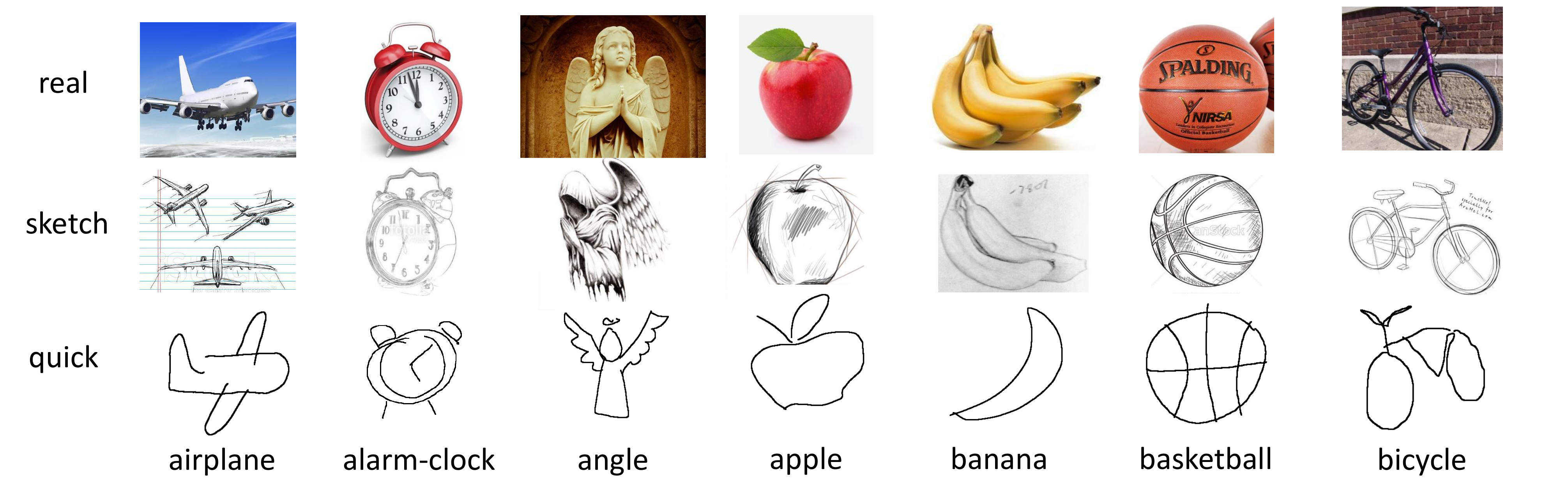}
    \caption{DomainNet dataset. Only three domains are used:
            `real' is similar to IN1K, `sketch' is less sensitive to color and background,
            `quick' only contains some lines, which is quite different from IN1K.}
    \label{fig:dom_dataset}
\end{figure}

\subsection{Practical Settings -- Classification}

\begin{figure}[h]
    \centering
    \includegraphics[width=1.0\textwidth]{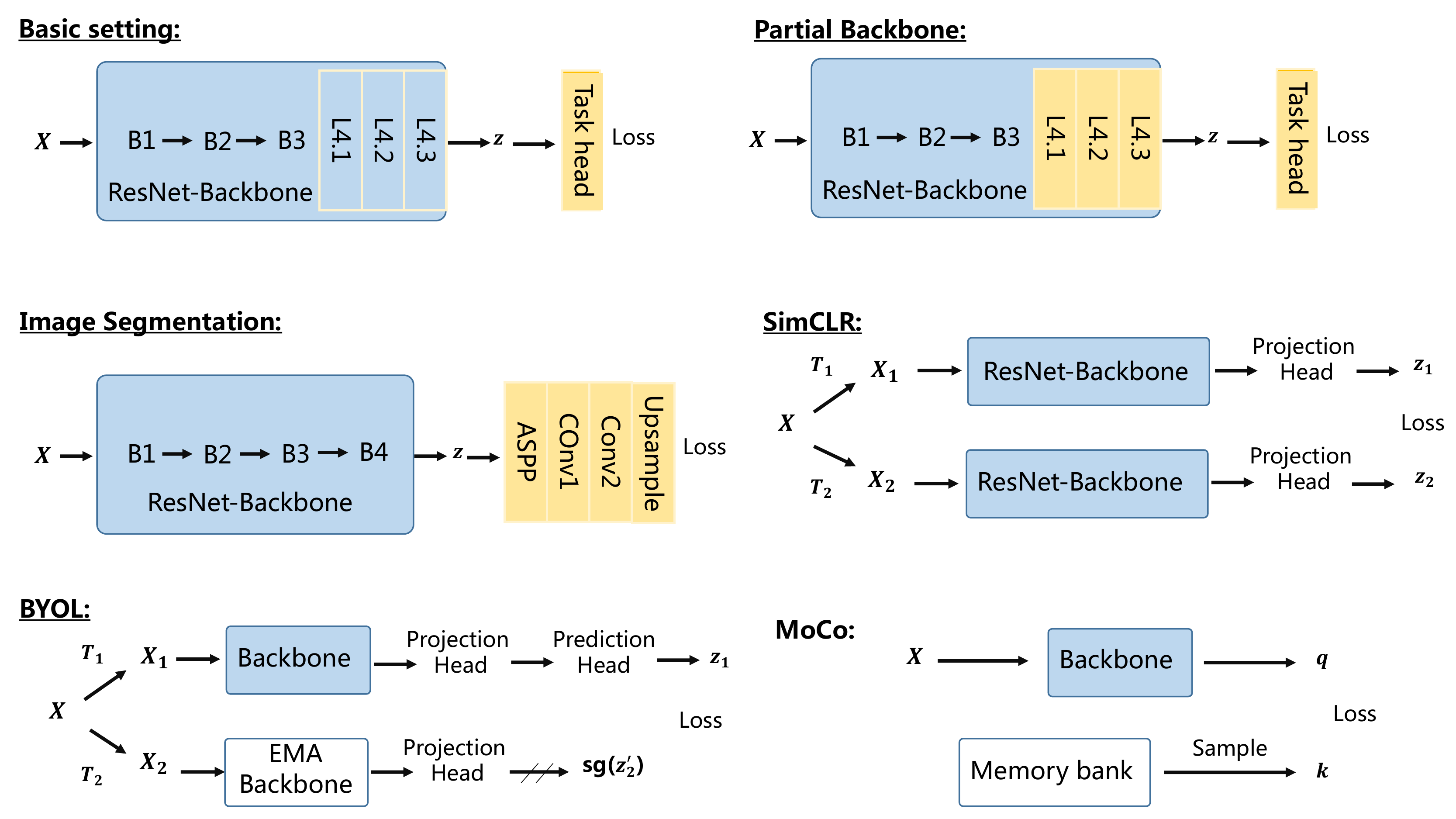}
    \caption{Introduction of models used in this paper.}
    \label{fig:app_models_setting}
\end{figure}

Besides the simple MNIST and MLP,
we also verify our analysis in many real settings.
To demonstrate the generality of our findings,
we conduct experiments on image classification, image segmentation, and molecular graph classification tasks.
The qualitative trend is quite consistent across different tasks and networks.

\textbf{Model Structure:}

Specifically, we consider ResNet18/34/50 for the image classification task.
They all have 4 blocks, each containing 3 layers, as illustrated in \cref{fig:app_models_setting}.
For the linear task head (or an MLP head),
we directly concatenate a linear layer (or an MLP layer) to the backbone.
For the partial backbone method, several layers in the last block (e.g., L4.3, L4.2, L4.1) might be reinitialized and merged to the task head.
We also provide what is the backbone (all the blue squares) when self-supervised-learning is considered,
e.g., SimCLR, BYOL, and MoCo.
In short, after pre-training,
the backbones under different PT tasks might have the same structure.

\textbf{Pretraining tasks:}

We pretrain the ResNet on CIFAR10 using common recipe ($10^{-2}$ learning rate with a cosine scheduler \citep{he2019bag}, 
$5*10^{-4}$ weight decay, 
simple augmentations like random flipping and cropping, etc.).
For the model pretrained on ImageNet,
we directly download the checkpoints from open-source implementations.
Due to time and computing resource limitations,
for vision tasks,
we only consider MoCo \citep{moco}, Byol \citep{byol}, SimCLR \citep{simclr}, and supervised classification tasks for pretraining.
For the graph dataset case,
we consider a fundamental 5-layers graph convolutional network (GCN, \citep{gcn}),
the hyper-parameters and other detailed settings can be found in our github repo.
The datasets applied in this paper are listed in \cref{tab:app_dataset}.
Note that the backbone of image segmentation is the same as that in image classification.
The segmentation head is more complex, which will be discussed later.

\textbf{Hyper-parameters for HP and FT:}

In this paper,
experiments with the same setting in different figures or tables share the same set of hyper-parameters.
Generally, for all the experiments,
the batch size is 128, hidden layer width is 256 (in the MLP head case).
The input image size for CIFAR pretrained model is $32 \times 32$ while that for the IN1K pretrained model is $224 \times 224$.
Hence in the corresponding experiments, we will first resize the input samples and then apply random cropping and random flipping augmentations.
For the HP phase, the $\tau^*$ is selected based on the validation performance,
but the detailed learning rate and maximum $\tau$ might be different under different settings (as the dataset size are different).
For the FT phase, we use a standard SGD with momentum ($\beta=0.9$).
The default learning rate is $10^{-3}$ and a cosine scheduler is applied (the maximum FT epoch also varies for different downstream datasets).
Note that we will early stop the FT process,
hence the maximum FT epoch doesn't influence the reported performance a lot.
For those dataset-dependent hyper-parameters, we summarize them as:
\begin{itemize}
    \item For the insufficient pretrained backbone cases, i.e., Res50-C10(Sup), Res18-C10(Sup), in \cref{fig:mainexp_sweep}, \cref{tab:main_tab1} and other related experiments in the appendix,
          we set $\tau\in[0,200]$, HP learning rate is $3*10^{-2}$, maximum FT epochs is 200 (usually converge less than 100 epochs),
          with usually a larger initial FT learning rate (e.g., $3*10^{-3}$);
    \item For the IN1K pretrained backbone (no matter by supervisory training, MoCo, Byol or SimCLR),
          which can be found in \cref{fig:sys_model}, \ref{fig:energy_direction}, \ref{fig:mainexp_sweep}, \cref{tab:main_tab1} and many figures in the appendix,
          we usually set $\tau\in[0,200]$, HP learning rate as $10^{-2}$, maximum FT epochs as 200, and a relative small FT learning rate (e.g., $3*10^{-4}$);
    \item For any experiments whose downstream tasks are Flowers, Cars (the dataset is small while number of classes is large),
          we consider to increase the HP learning rate to $5*10^{-2}$, and the maximum FT epochs to 1000 (usually converge less than 200 epochs);
    \item For any experiments whose downstream task is CIFAR (the dataset is bigger than STL),
          we consider to increase the HP learning rate to $5*10^{-3}$, and the maximum FT epochs to 100 (usually converge less than 50 epochs);
\end{itemize}

In summary,
a general principle for these hyper-parameter selections is that the maximum $\tau$ should make the HP training accuracy converge 
(increase slowly for several consecutive epochs),
while the finetuning epochs should make the training accuracy converge when $\tau=0$.
For example, if the downstream task is CIFAR10,
which has 50,000 training samples and only 10 classes,
finetune 20 epochs is enough.
But CIFAR100, which also has 50,000 training samples but with more classes,
50 finetuning epochs are required (we set the maximum FT epochs as 100 for both CIFAR10 and CIFAR100).
For Flowers102,
which only has 6,149 training samples but 102 classes,
we set the maximum FT epochs as 1,000 epochs.
Anyway, we find the proposed trend is quite robust to these hyper-parameters.
The detailed settings can be found in our code base.

\begin{table}[t]
\centering
\resizebox{1\textwidth}{!}{
    \begin{tabular}{ccccc}
    \hline
     & \textbf{$\#$ train} & \textbf{$\#$ test} & \textbf{$\#$ class} & \textbf{Comments} \\ \hline
    \textbf{MNIST} & 60,000 & 10,000 & 10 & Toy downstream use a subset, as in \cref{fig:mnist_dataset}. \\
    \textbf{STL10} & 5,000 & 8,000 & 10 & \citet{stl10} \\
    \textbf{CIFAR10} & 50,000 & 10,000 & 10 & \citet{cifar} \\
    \textbf{CIFAR100} & 50,000 & 10,000 & 100 & \citet{cifar} \\
    \textbf{Flowers102} & 6,149 & 2,040 & 102 &  \citet{flowers}\\
    \textbf{StanfordCars} & 8,144 & 8,041 & 196 & \citet{cars} \\
    \textbf{Dom-real} & \textless 20,000 & \textless4,000 & 200 (345) & A subset, original 345 classes and more samples. \\
    \textbf{Dom-sketch} & \textless20,000 & \textless4,000 & 200 (345) & Less sensitive to color and texture. \\
    \textbf{Dom-quick} & \textless20,000 & \textless4,000 & 200 (345) & Only contains simple curves. \citep{domainnet}\\ 
    \textbf{ImageNet-1K} & 1,281,167 & -- & 1000 & \citet{in1k} \\ \hline
    \textbf{ogbg-moltox21} & 6,272 & 1,742 & 12  & \citet{wu2018moleculenet} \\
    \textbf{ogbg-molhiv} & 32,901 & 8,226 & regression  & \citet{hu2020ogb} \\
    \textbf{ogbg-molpcba} & 437,929 & -- & 128  & \citet{hu2020ogb}  \\ 
    \hline
    \end{tabular}}
    \caption{Datasets (vision and molecular graph) used in experiments. }
    \label{tab:app_dataset}
\end{table}

\subsection{Practical Settings -- Image Segmentation}
In addition to the image classification tasks, we also conduct experiments on image segmentation tasks to verify the robustness of our analysis on various tasks. 
An image segmentation task is a task where a model is trained to make class predictions at each pixel of an image given an input image.
We train and test segmentation models on PASCAL VOC~\citep{VOC:Everingham2015} dataset, one of the most popular image segmentation datasets.
An example of input image and segmentation label are provided in \cref{fig:voc_dataset}.

\begin{figure}[h]
    \begin{subfigure}{.45\linewidth}
    \centering
    \includegraphics[width=0.8\textwidth]{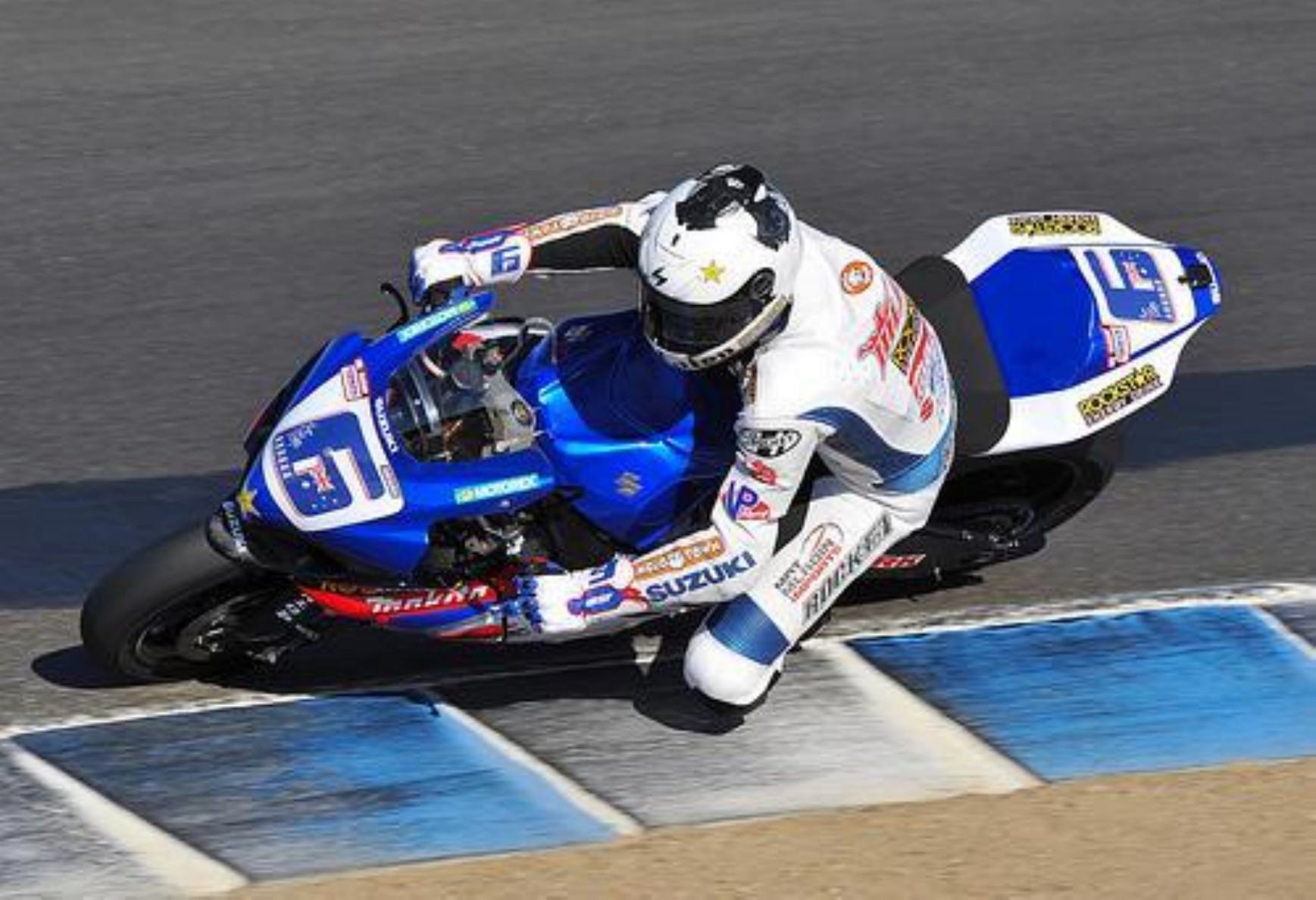}
    \caption{An input image.}
    \end{subfigure}
    \begin{subfigure}{.45\linewidth}
    \centering
    \includegraphics[width=0.8\textwidth]{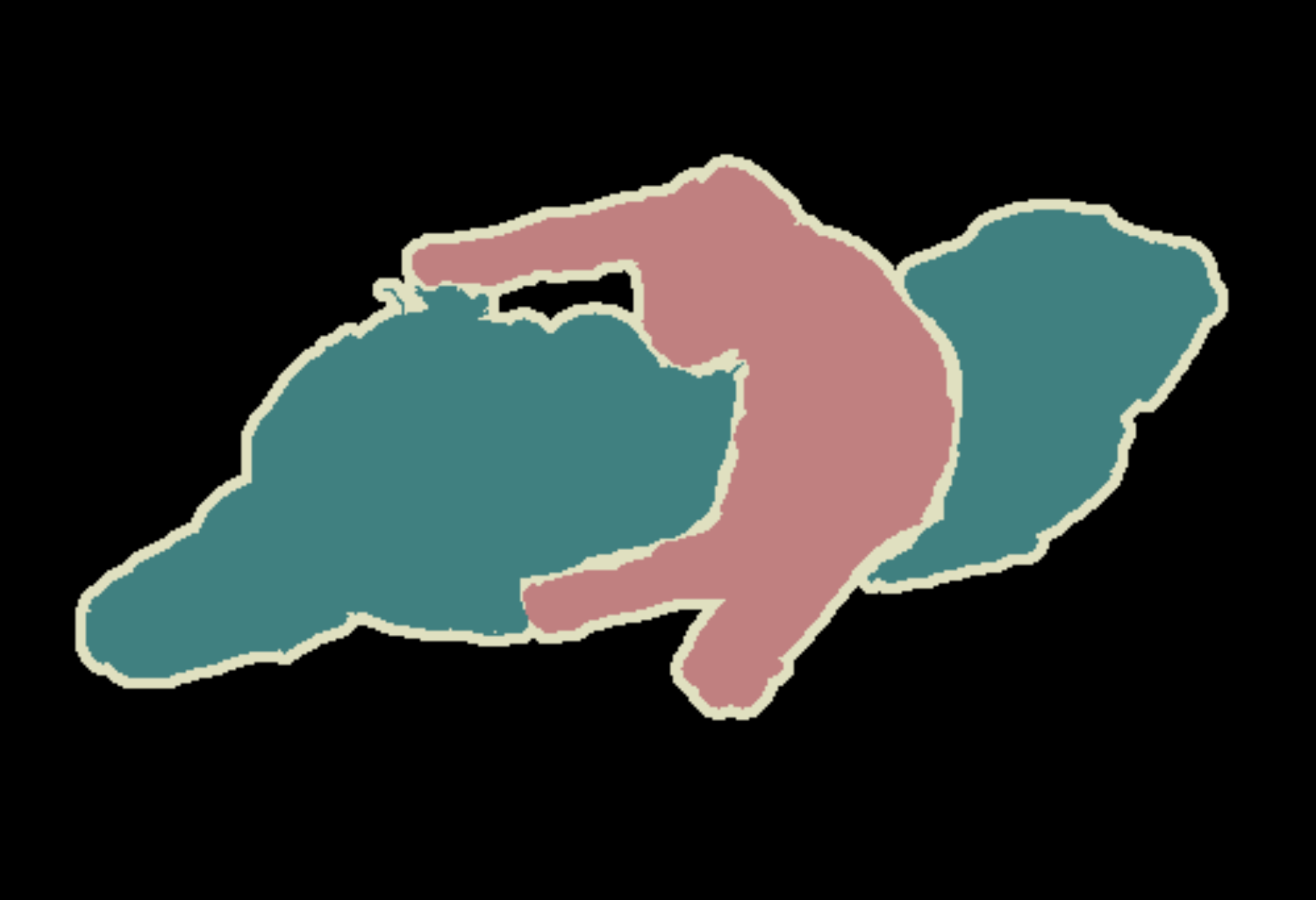}
    \caption{A segmentation label.}
    \end{subfigure}
\centering
\caption{An example of an image and label pair in PASCAL VOC dataset.}
\label{fig:voc_dataset}
\end{figure}

\textbf{Model Structure:}

For image segmentation tasks, we employ DeeplabV3~\citep{DeeplabV3:2017Chen} with ResNet50 as a backbone.
As with the classification tasks, ResNet50 consists of 4 blocks, each of which contains 3 layers.
But unlike the classification tasks, kernel strides for convolutional layers are adjust to have the higher spatial resolution for the output features of the backbone; the spatial resolution of the features is $7 \times 7$ for the classification tasks whereas it is $17 \times 17$ for the segmentation tasks.

The features extracted from the ResNet50 backbone are passed to a segmentation head, which consists of Atrous Spatial Pyramid Pooling (ASPP)~\citep{deeplab2017chen} layers followed by a couple of convolutional layers.
ASPP is a stack of five regular and dilated convolutional layers which provide the features with various size of receptive fields.
The spatial resolution of the outputs of the segmentation head stays in $17 \times 17$, which are then upsampled to the original input image size through bilinear interpolation.
Here, we use the whole segmentation head as the task head.

\textbf{Hyper-parameters for HP and FT:}
The ResNet50 backbone is pretrained on ImageNet dataset.
We set $\tau \in [0, 200]$, HP learning rate as $0.3$, maximum FT epochs as 200 (usually converge less than 50 epochs).
Also, we use batch size of $16$, and a SGD optimizer with momention ($\beta = 0.9$) but without weight decay nor learning rate scheduler.

\section{More experiments}
\label{append:experiment_change_of_z}

\begin{figure}[t]
    \centering
    \includegraphics[width=0.95\textwidth]{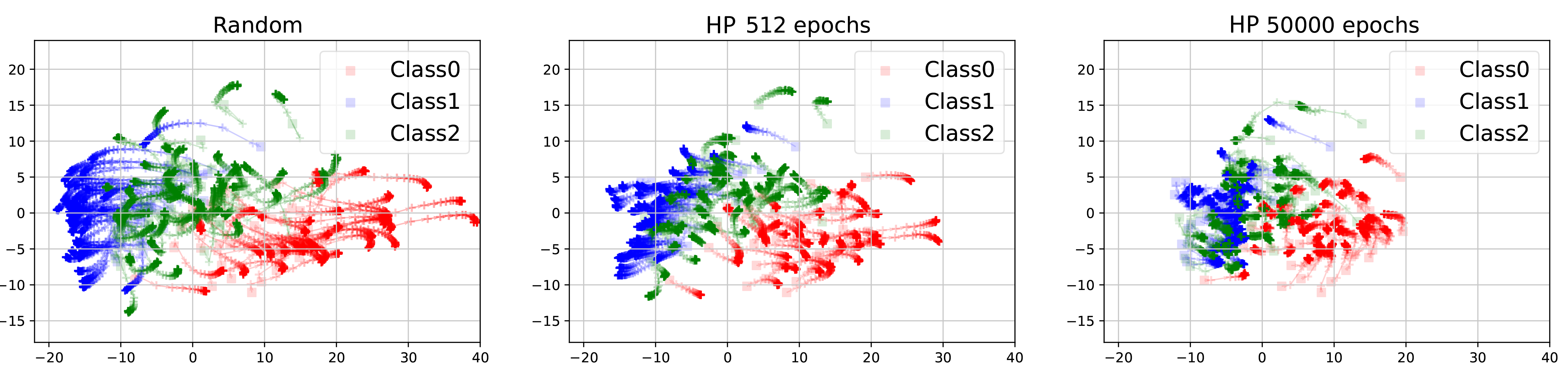}
    \caption{Change of $\vz$ during FT under a toy setting.
    Using transparent square markers,
    we first draw the 2-D projections (the first two components in PCA) of 100 randomly selected $\vz_0^{(n)}$.
    Then keeping the eigen-directions of this PCA,
    we project $\vz_t^{(n)}$ of different $t$ and encode $t$ by transparency.
    The converged $\vz$ is represented by a cross marker with full transparency.
    It is clear that FT using $\vv_{rnd}$ makes $\vz$ change a lot,
    while $\vv_{hp}^{50000}$ doesn't change $\vz$ too much.}
    \label{fig:app_path}
\end{figure}

\subsection{Verification on the change of z}

In \cref{eq:z_dynamics}, 
we decompose the learning dynamics of $\vz_t$ into three parts: relatively stable kernel, direction, and energy.
Based on this, 
we can expect a larger adaptation when the initial energy is large.
Furthermore, to get a more precise description of $\vz_t$'s adaptation, 
we analyze a simple overparameterized linear model in \cref{append:linear}.
We show that as the initial prediction gap (i.e., energy) increases,
the resulting $\vz_t$ will first be stretched with a small direction change (compared with the original $\vz_0$). 
If the initial energy keeps increasing,
the resulting $\vz_t$ might be more and more dissimilar to the original one. 
We use four distance related quantity to describe this trend, 
and verify it under many different settings. 
In short, the trends are: 1.) more energy leads to larger $\|\vz_t-\vz_0\|_2^2$ and smaller $\cos(\vz_t,\vz_0)$; 
2.) $\vz_t^\top \vz_0$ and $\|\vz_t\|_2^2$ has a quadratic shape when energy decrease.
\cref{fig:app_path} provides an example of how $\vz_t$ changes when different $\tau$ is chosen.

Before diving deep into these examples,
we provide some general commands.
First, we do not expect the experimental results to align perfectly with the examples in \cref{fig:main_zchange},
because too many designs (like dropout, data augmentation, SGD noise, locking in strange local minima, learning rate schedule, etc.) can influence the shape of the curve,
hence we believe observing specific trends in many cases is already a strong support for our analysis.
Second, one might be curious about why in some cases,
the ranges between $\vz_t^\top \vz_0$ and $\cos(\vz_t,\vz_0)$ are so small. 
Hence in \cref{fig:explain_zt}, 
we provide an illustration of why this happens:
as the dot product measures two long vectors (remember they have the same origin), 
a small change of $\vz_t^\top \vz_0$ can make a big difference.
A large range of $\vz_t^\top \vz_0$ and $\cos(\vz_t,\vz_0)$ is a sign that $\vz_t$ already moves to another basin,
which is the strong-case in \cref{fig:main_zchange}.
Third, in \cref{append:linear},
the change of $\vz$ is linked to $\vq_0-\ve_y$ under an MSE loss,
while in these experiments, we use $\tau$ as our x-axis and consider a cross-entropy loss.
Combining with the NTK approximation applied in this ideal model,
the experimental trends might not be exactly the same with the theoretical ones.

Then, in \cref{fig:z_diff_ptdata}, \cref{fig:z_diff_task} and \cref{fig:z_diff_downstream},
we demonstrate the trend when the model is pretrained on different datasets,
on different tasks, and when the downstream tasks are different, respectively.
Besides the general trends,
i.e., decreasing $\|\vz_t-\vz_0\|_2^2$, increasing $\cos(\vz_t,\vz_0)$, quadratic $\vz_t^\top\vz_0$ and $\|\vz_t\|_2^2$,
we can observe another interesting phenomenon when the model is pretrained using different tasks.
See \cref{fig:z_diff_task},
in Byol and SimCLR,
sometimes we might get a very big $\vz_t^\top\vz_0$ (together with a very small $\cos(\vz_t,\vz_0)$),
which makes it hard to observe any quadratic trend in some metrics.
But in the supervised pretraining case (no matter what dataset we use in pretrain and downstream tasks),
we never observe such a phenomenon -- the $\|\vz_t-\vz_0\|_2^2$ always changes in a relatively small range.
We speculate that in the first several updates during head probing,
the features first adapt to the downstream task (i.e., classification),
then gradually adapt to the downstream dataset distribution (e.g., STL, Flowers, etc.).
So a randomly initialized head might influence more when the pretrain task is different from the downstream one.
The detailed mechanism of adaptation of task and data distribution might be more complex than we expect,
so we left this for the future work.

\begin{figure}[t]
    \centering
    \includegraphics[width=0.9\textwidth]{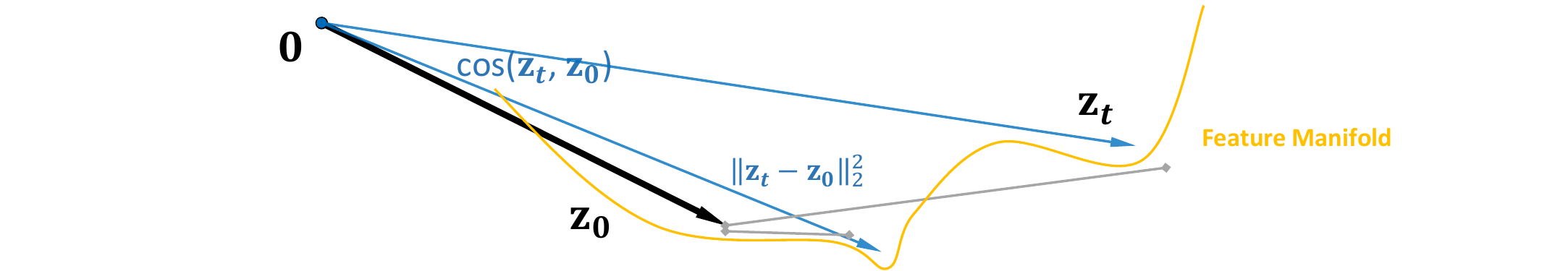}
    \caption{Explanation of why small change of $\vz_t^\top \vz_0$, $cos(\vz_t,\vz_0)$ can influence a lot.}
    \label{fig:explain_zt}
\end{figure}

\begin{figure}[h]
    \centering
    \includegraphics[width=0.9\textwidth]{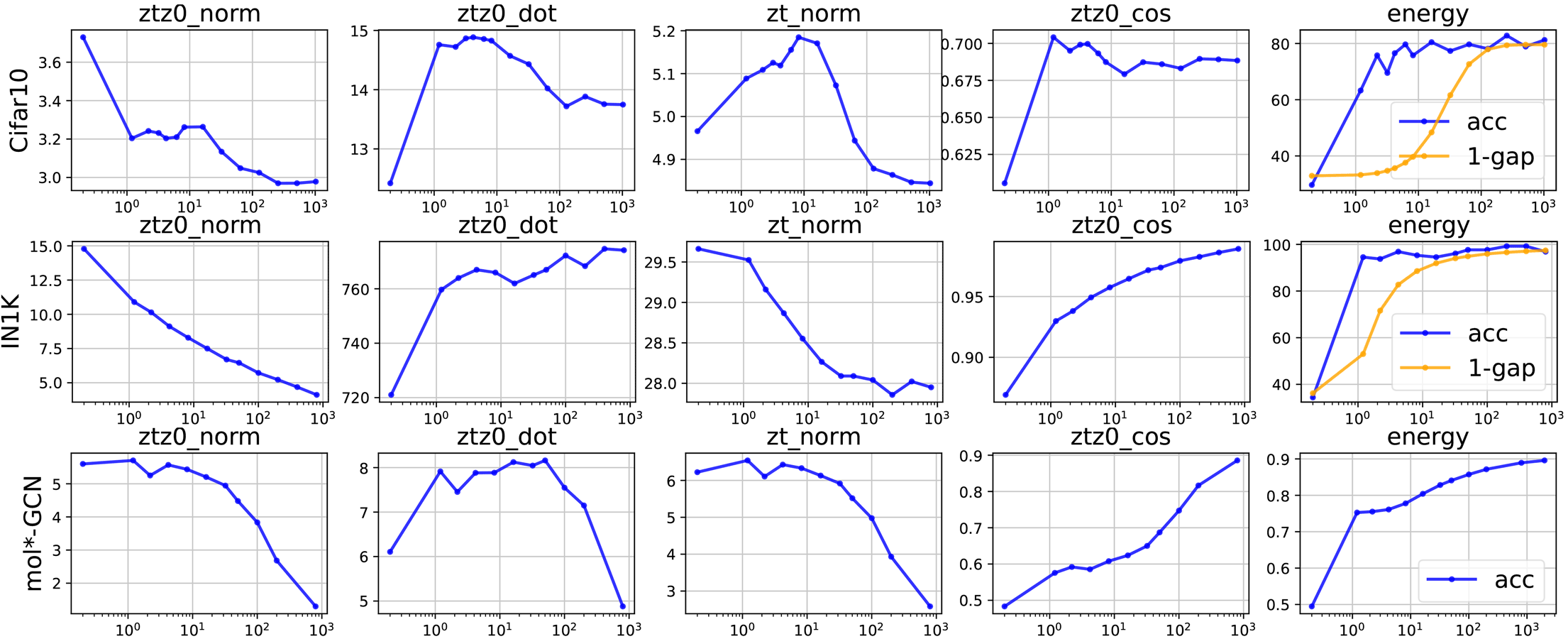}
    \caption{Adaptation of $\vz_t$ when model is pretrained on different datasets.}
    \label{fig:z_diff_ptdata}
\end{figure}

\begin{figure}[h]
    \centering
    \includegraphics[width=0.9\textwidth]{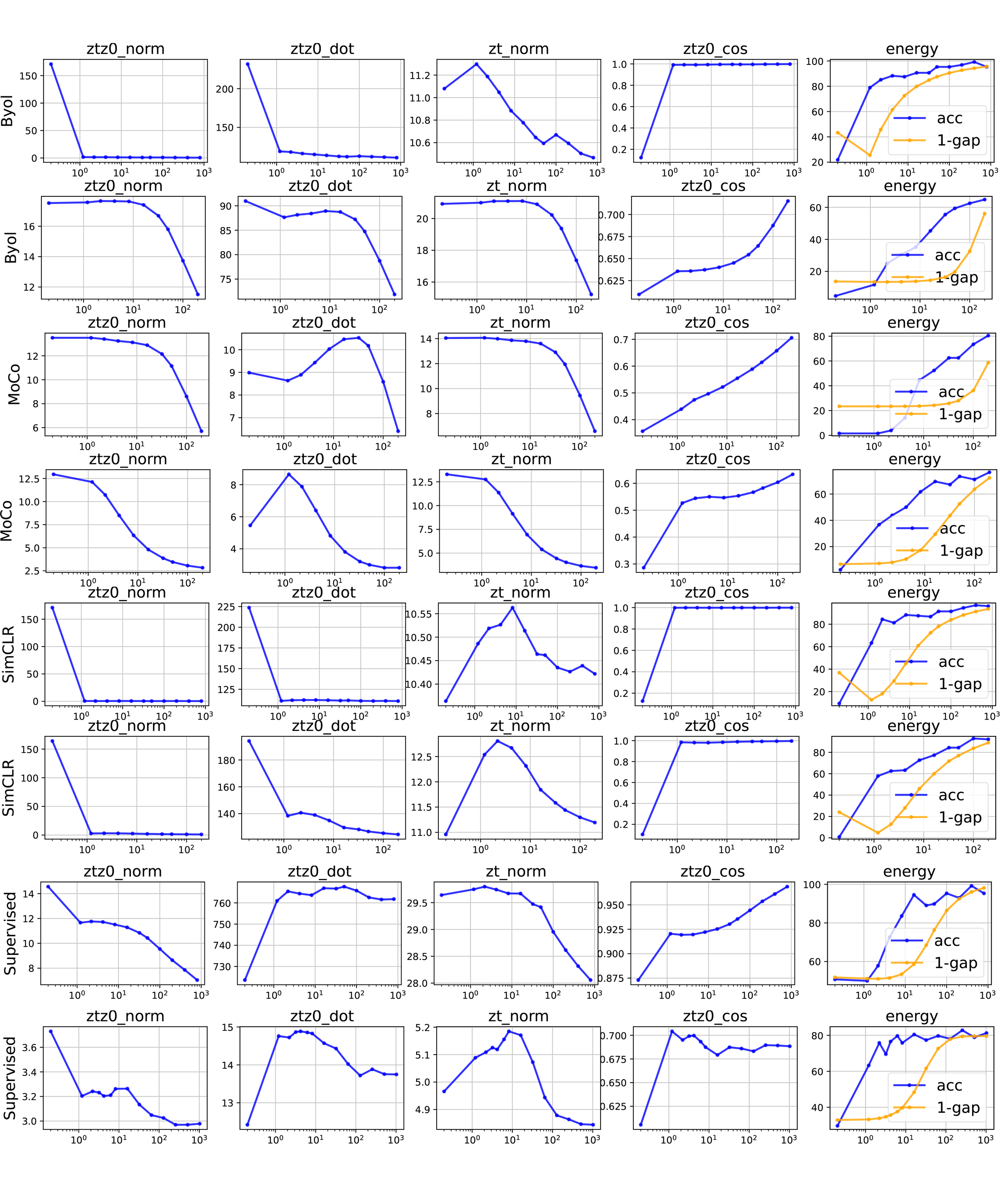}
    \caption{Adaptation of $\vz_t$ when model is pretrained using different tasks.}
    \label{fig:z_diff_task}
\end{figure}

\begin{figure}[h]
    \centering
    \includegraphics[width=0.9\textwidth]{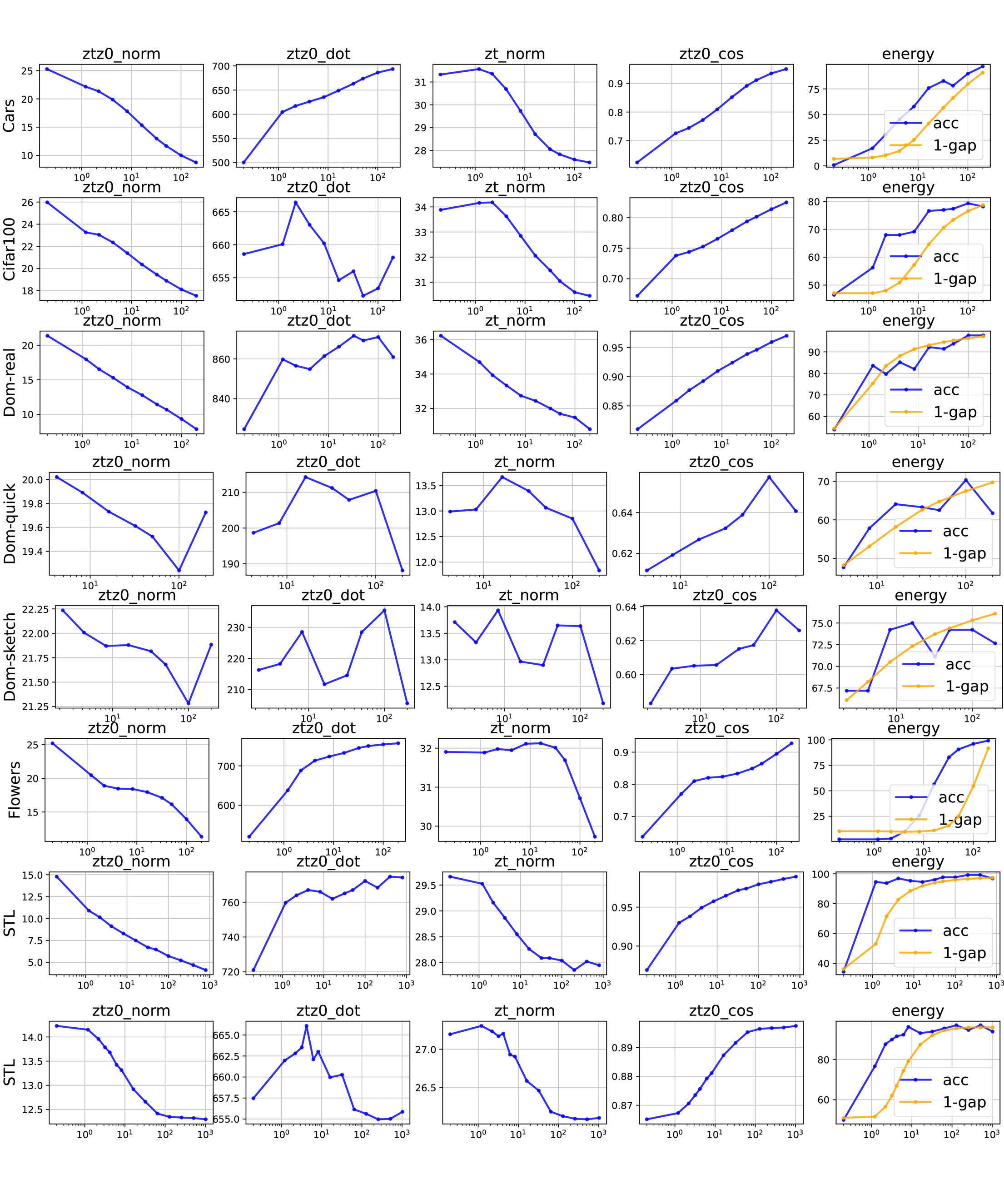}
    \caption{Adaptation of $\vz_t$ when downstream tasks are different.}
    \label{fig:z_diff_downstream}
\end{figure}

\subsection{Influence of the backbone and the task-head capacity}

In \cref{sec:3.3}, we mention a case where the task head is more complex.
For example, the head can be a two-layers MLP rather than one linear layer,
or we can only copy part of the pretrained model as our backbone.
However, these methods and tricks are quite complex and might not necessarily enhance the performance,
hence we put some results and discussions here.
Remember that our analysis of how energy influences the features adaptation is still valid in these cases,
as illustrated by the distance related metrics in the tables.

From \cref{tab:app_tab1}, \ref{tab:app_tab2}, \ref{tab:app_tab3}, and \ref{tab:app_tab4},
we can draw the following three conclusions.
First, the two-layers MLP design will surely increase the training accuracy after head probing,
and hence make the features adapt less (but this might not enhance the downstream performance).
Second, we compare the downstream performance when some layers of the backbone are reinitialized as part of the task head.
The title ``+L4.3'' means we take the last layer (i.e., the 3-rd layer) of the last block (i.e., the 4-th block) in a ResNet50 out,
and treat it as part of the task head.
The title ``+L4.2'' and ``+L4.1'' means we continue treating the 2-nd or the 1-st layer as the backbone.
Under such a setting, the ``+L4.1'' case will inherit the least amount of information from the pretrained model,
and at the same time,
have the biggest task head.
For the results in these tables,
we train the head until convergence in head probing phase,
then fine tune the whole network together.
Hence we see the ``HP-train-acc'' value all increase to roughly 100\%.
However, the best setting differs when different downstream datasets are considered.
When the pretrained features are good (i.e., trained using IN1K),
we see the Domain-real dataset needs less adaptation and usually performs the best in the baseline case (i.e., copy all parameters from the pre-trained model).
But Domain-quick dataset,
which only contains some black-and-white lines,
prefers the ``+L4.1'' setting.
In other words,
preserving the features captured by the earlier layers of the pretrained model is beneficial.
When the pretrained features are bad,
like the CIFAR-pretraining case in \cref{tab:app_tab3},
all these datasets prefer throwing away the later layers.

In summary, although the correlation between features adaptation and downstream task performance under different settings is quite complicated,
our analysis of energy can still explain some phenomena well.

\begin{table}[h]
\centering
\resizebox{0.9\textwidth}{!}{
    \begin{tabular}{ccccccccc}
    \hline
     &  & Real & Sketch & Quick &  & Real & Sketch & Quick \\ \hline
    \multicolumn{1}{c|}{} & \multicolumn{1}{c|}{HP-train-acc} & 98.994 & 63.369 & \multicolumn{1}{c|}{48.555} & \multicolumn{1}{c|}{} & {\color[HTML]{EA4335} 100} & {\color[HTML]{EA4335} 93.447} & {\color[HTML]{EA4335} 63.75} \\
    \multicolumn{1}{c|}{} & \multicolumn{1}{c|}{$1-\cos(\vz_t,\vz_0)$} & 0.0315 & 0.2275 & \multicolumn{1}{c|}{0.3506} & \multicolumn{1}{c|}{} & {\color[HTML]{4285f4} 0.0149} & {\color[HTML]{4285f4} 0.2032} & {\color[HTML]{4285f4} 0.3375} \\
    \multicolumn{1}{c|}{} & \multicolumn{1}{c|}{$\|\vz_t-\vz_0\|^2$} & 17.868 & 22.246 & \multicolumn{1}{c|}{27.954} & \multicolumn{1}{c|}{} & {\color[HTML]{4285f4} 17.694} & {\color[HTML]{4285f4} 21.171} & {\color[HTML]{4285f4} 27.634} \\ \cline{3-5}
    \multicolumn{1}{c|}{\multirow{-4}{*}{\begin{tabular}[c]{@{}c@{}}Linear-head\\ Baseline\end{tabular}}} & \multicolumn{1}{c|}{val-acc} & \multicolumn{1}{c|}{{\textbf{54.839}}} & \multicolumn{1}{c|}{{\textbf{37.789}}} & \multicolumn{1}{c|}{{\color[HTML]{333333} 61.416}} & \multicolumn{1}{c|}{\multirow{-4}{*}{\begin{tabular}[c]{@{}c@{}}2-MLP\\ Head\end{tabular}}} & {\color[HTML]{4285f4} 53.856} & {\color[HTML]{4285f4} 35.446} & {\color[HTML]{4285f4} 59.199} \\ \hline \hline
    \multicolumn{1}{c|}{+L4.3} & \multicolumn{1}{c|}{val-acc} & \multicolumn{1}{c|}{{\color[HTML]{333333} 53.629}} & \multicolumn{1}{c|}{{\color[HTML]{333333} 35.966}} & \multicolumn{1}{c|}{{\textbf{61.568}}} & \multicolumn{1}{c|}{HP-train-acc} & {\color[HTML]{EA4335} 99.492} & {\color[HTML]{EA4335} 99.072} & {\color[HTML]{EA4335} 93.975} \\ \cline{3-5}
    \multicolumn{1}{c|}{+L4.2} & \multicolumn{1}{c|}{val-acc} & \multicolumn{1}{c|}{{\color[HTML]{333333} 53.226}} & \multicolumn{1}{c|}{{\color[HTML]{333333} 36.69}} & \multicolumn{1}{c|}{{\color[HTML]{333333} 61.391}} & \multicolumn{1}{c|}{HP-train-acc} & {\color[HTML]{EA4335} 99.033} & {\color[HTML]{EA4335} 99.336} & {\color[HTML]{EA4335} 98.193} \\ \cline{3-5}
    \multicolumn{1}{c|}{+L4.1} & \multicolumn{1}{c|}{val-acc} & \multicolumn{1}{c|}{{\color[HTML]{333333} 53.856}} & \multicolumn{1}{c|}{{\color[HTML]{333333} 35.446}} & \multicolumn{1}{c|}{{\color[HTML]{333333} 59.199}} & \multicolumn{1}{c|}{HP-train-acc} & {\color[HTML]{EA4335} 99.863} & {\color[HTML]{EA4335} 93.447} & {\color[HTML]{EA4335} 63.75} \\ \hline
    \end{tabular}
    }
    \caption{Results on a ResNet50 pertrained on Domain-Real. The pretrained validation accuracy is only 54.612 while the training accuracy is 100.
            Maybe because the DomainNet-Real dataset only contains less than 20k samples.}
    \label{tab:app_tab1}
\end{table}

\begin{table}[h]
\centering
\resizebox{0.9\textwidth}{!}{
    \begin{tabular}{ccccccccc}
    \hline
     &  & Real & Sketch & Quick &  & Real & Sketch & Quick \\ \hline
    \multicolumn{1}{c|}{} & \multicolumn{1}{c|}{HP-train-acc} & 60.039 & 39.092 & \multicolumn{1}{c|}{51.855} & \multicolumn{1}{c|}{} & {\color[HTML]{EA4335} 88.965} & {\color[HTML]{EA4335} 79.58} & {\color[HTML]{EA4335} 76.445} \\
    \multicolumn{1}{c|}{} & \multicolumn{1}{c|}{$1-\cos(\vz_t,\vz_0)$} & 0.3492 & 0.4181 & \multicolumn{1}{c|}{0.5034} & \multicolumn{1}{c|}{} & {\color[HTML]{4285f4} 0.3296} & {\color[HTML]{4285f4} 0.3672} & {\color[HTML]{4285f4} 0.4923} \\
    \multicolumn{1}{c|}{} & \multicolumn{1}{c|}{$\|\vz_t-\vz_0\|^2$} & 9.963 & 9.72 & \multicolumn{1}{c|}{9.287} & \multicolumn{1}{c|}{} & {\color[HTML]{4285f4} 9.104} & {\color[HTML]{4285f4} 9.209} & {\color[HTML]{EA4335} 9.874} \\ \cline{3-5}
    \multicolumn{1}{c|}{\multirow{-4}{*}{\begin{tabular}[c]{@{}c@{}}Linear-head\\ Baseline\end{tabular}}} & \multicolumn{1}{c|}{val-acc} & \multicolumn{1}{c|}{{\color[HTML]{333333} 58.846}} & \multicolumn{1}{c|}{{\color[HTML]{333333} 45.023}} & \multicolumn{1}{c|}{{\color[HTML]{333333} 61.593}} & \multicolumn{1}{c|}{\multirow{-4}{*}{\begin{tabular}[c]{@{}c@{}}2-MLP\\ Head\end{tabular}}} & {\color[HTML]{4285f4} 58.77} & {\color[HTML]{4285f4} 43.171} & {\color[HTML]{4285f4} 60.307} \\ \hline \hline
    \multicolumn{1}{c|}{+L4.3} & \multicolumn{1}{c|}{val-acc} & \multicolumn{1}{c|}{{\color[HTML]{333333} 59.929}} & \multicolumn{1}{c|}{{\color[HTML]{333333} 42.882}} & \multicolumn{1}{c|}{{\color[HTML]{333333} 59.602}} & \multicolumn{1}{c|}{HP-train-acc} & {\color[HTML]{EA4335} 100} & {\color[HTML]{EA4335} 100} & {\color[HTML]{EA4335} 98.252} \\ \cline{3-5}
    \multicolumn{1}{c|}{+L4.2} & \multicolumn{1}{c|}{val-acc} & \multicolumn{1}{c|}{{\color[HTML]{333333} 60.837}} & \multicolumn{1}{c|}{{\color[HTML]{333333} 44.734}} & \multicolumn{1}{c|}{{\color[HTML]{333333} 61.542}} & \multicolumn{1}{c|}{HP-train-acc} & {\color[HTML]{EA4335} 100} & {\color[HTML]{EA4335} 100} & {\color[HTML]{EA4335} 100} \\ \cline{3-5}
    \multicolumn{1}{c|}{+L4.1} & \multicolumn{1}{c|}{val-acc} & \multicolumn{1}{c|}{{\textbf{61.164}}} & \multicolumn{1}{c|}{{\textbf{45.775}}} & \multicolumn{1}{c|}{{\textbf{62.626}}} & \multicolumn{1}{c|}{HP-train-acc} & {\color[HTML]{EA4335} 100} & {\color[HTML]{EA4335} 100} & {\color[HTML]{EA4335} 100} \\ \hline
    \end{tabular}
    }
    \caption{Results on a ResNet34 pertrained on CIFAR-100 in a classification task.}
    \label{tab:app_tab2}
\end{table}

\begin{table}[h]
\centering
\resizebox{0.9\textwidth}{!}{
    \begin{tabular}{ccccccccc}
    \hline
     &  & Real & Sketch & Quick &  & Real & Sketch & Quick \\ \hline
    \multicolumn{1}{c|}{} & \multicolumn{1}{c|}{HP-train-acc} & 96.875 & 85.938 & \multicolumn{1}{c|}{64.063} & \multicolumn{1}{c|}{} & {\color[HTML]{4285f4} 96.671} & {\color[HTML]{EA4335} 98.438} & {\color[HTML]{EA4335} 84.375} \\
    \multicolumn{1}{c|}{} & \multicolumn{1}{c|}{$1-\cos(\vz_t,\vz_0)$} & 0.1384 & 0.2807 & \multicolumn{1}{c|}{0.4432} & \multicolumn{1}{c|}{} & {\color[HTML]{EA4335} 0.1544} & {\color[HTML]{4285f4} 0.2661} & {\color[HTML]{4285f4} 0.4147} \\
    \multicolumn{1}{c|}{} & \multicolumn{1}{c|}{$\|\vz_t-\vz_0\|^2$} & 16.569 & 18.005 & \multicolumn{1}{c|}{21.869} & \multicolumn{1}{c|}{} & {\color[HTML]{EA4335} 16.843} & {\color[HTML]{4285f4} 17.672} & {\color[HTML]{4285f4} 21.389} \\ \cline{3-5}
    \multicolumn{1}{c|}{\multirow{-4}{*}{\begin{tabular}[c]{@{}c@{}}Linear-head\\ Baseline\end{tabular}}} & \multicolumn{1}{c|}{val-acc} & \multicolumn{1}{c|}{{\textbf{85.685}}} & \multicolumn{1}{c|}{{\textbf{72.51}}} & \multicolumn{1}{c|}{{\textbf{69.531}}} & \multicolumn{1}{c|}{\multirow{-4}{*}{\begin{tabular}[c]{@{}c@{}}2-MLP\\ Head\end{tabular}}} & {\color[HTML]{4285f4} 85.031} & {\color[HTML]{4285f4} 69.651} & {\color[HTML]{4285f4} 66.683} \\ \hline \hline
    \multicolumn{1}{c|}{+L4.3} & \multicolumn{1}{c|}{val-acc} & \multicolumn{1}{c|}{{\color[HTML]{333333} 84.929}} & \multicolumn{1}{c|}{{\color[HTML]{333333} 69.734}} & \multicolumn{1}{c|}{{\color[HTML]{333333} 68.422}} & \multicolumn{1}{c|}{HP-train-acc} & {\color[HTML]{EA4335} 100} & {\color[HTML]{EA4335} 100} & {\color[HTML]{EA4335} 95.232} \\ \cline{3-5}
    \multicolumn{1}{c|}{+L4.2} & \multicolumn{1}{c|}{val-acc} & \multicolumn{1}{c|}{{\color[HTML]{333333} 83.342}} & \multicolumn{1}{c|}{{\color[HTML]{333333} 68.547}} & \multicolumn{1}{c|}{{\color[HTML]{333333} 68.7}} & \multicolumn{1}{c|}{HP-train-acc} & {\color[HTML]{EA4335} 100} & {\color[HTML]{EA4335} 100} & {\color[HTML]{EA4335} 98.087} \\ \cline{3-5}
    \multicolumn{1}{c|}{+L4.1} & \multicolumn{1}{c|}{val-acc} & \multicolumn{1}{c|}{{\color[HTML]{333333} 81.074}} & \multicolumn{1}{c|}{{\color[HTML]{333333} 69.155}} & \multicolumn{1}{c|}{{\color[HTML]{333333} 68.246}} & \multicolumn{1}{c|}{HP-train-acc} & {\color[HTML]{EA4335} 100} & {\color[HTML]{EA4335} 100} & {\color[HTML]{EA4335} 98.842} \\ \hline
    \end{tabular}
    }
    \caption{Results on a ResNet50 pertrained on ImageNet-1K in a classification task.}
    \label{tab:app_tab3}
\end{table}

\begin{table}[h!]
\centering
\resizebox{0.9\textwidth}{!}{
    \begin{tabular}{ccccccccc}
    \hline
     &  & Real & Sketch & Quick &  & Real & Sketch & Quick \\ \hline
    \multicolumn{1}{c|}{} & \multicolumn{1}{c|}{HP-train-acc} & 92.676 & 78.213 & \multicolumn{1}{c|}{64.307} & \multicolumn{1}{c|}{} & {\color[HTML]{EA4335} 99.121} & {\color[HTML]{EA4335} 94.883} & {\color[HTML]{EA4335} 82.354} \\
    \multicolumn{1}{c|}{} & \multicolumn{1}{c|}{$1-\cos(\vz_t,\vz_0)$} & 0.1269 & 0.1422 & \multicolumn{1}{c|}{0.1934} & \multicolumn{1}{c|}{} & {\color[HTML]{4285f4} 0.1098} & {\color[HTML]{4285f4} 0.1104} & {\color[HTML]{4285f4} 0.1522} \\
    \multicolumn{1}{c|}{} & \multicolumn{1}{c|}{$\|\vz_t-\vz_0\|^2$} & 5.247 & 4.752 & \multicolumn{1}{c|}{4.361} & \multicolumn{1}{c|}{} & {\color[HTML]{4285f4} 4.942} & {\color[HTML]{4285f4} 4.206} & {\color[HTML]{4285f4} 3.868} \\ \cline{3-5}
    \multicolumn{1}{c|}{\multirow{-4}{*}{\begin{tabular}[c]{@{}c@{}}Linear-head\\ Baseline\end{tabular}}} & \multicolumn{1}{c|}{val-acc} & \multicolumn{1}{c|}{{\color[HTML]{333333} 78.075}} & \multicolumn{1}{c|}{{\color[HTML]{333333} 61.545}} & \multicolumn{1}{c|}{{\color[HTML]{333333} 59.703}} & \multicolumn{1}{c|}{\multirow{-4}{*}{\begin{tabular}[c]{@{}c@{}}2-MLP\\ Head\end{tabular}}} & {\color[HTML]{EA4335} 78.453} & {\color[HTML]{EA4335} 63.773} & {\color[HTML]{EA4335} 63.609} \\ \hline \hline
    \multicolumn{1}{c|}{+L4.3} & \multicolumn{1}{c|}{val-acc} & \multicolumn{1}{c|}{{\textbf{78.528}}} & \multicolumn{1}{c|}{{\textbf{65.683}}} & \multicolumn{1}{c|}{{\color[HTML]{333333} 67.011}} & \multicolumn{1}{c|}{HP-train-acc} & {\color[HTML]{EA4335} 100} & {\color[HTML]{EA4335} 100} & {\color[HTML]{EA4335} 97.666} \\ \cline{3-5}
    \multicolumn{1}{c|}{+L4.2} & \multicolumn{1}{c|}{val-acc} & \multicolumn{1}{c|}{{\color[HTML]{333333} 78.427}} & \multicolumn{1}{c|}{{\color[HTML]{333333} 65.336}} & \multicolumn{1}{c|}{{\color[HTML]{333333} 67.087}} & \multicolumn{1}{c|}{HP-train-acc} & {\color[HTML]{EA4335} 100} & {\color[HTML]{EA4335} 100} & {\color[HTML]{EA4335} 100} \\ \cline{3-5}
    \multicolumn{1}{c|}{+L4.1} & \multicolumn{1}{c|}{val-acc} & \multicolumn{1}{c|}{{\color[HTML]{333333} 76.689}} & \multicolumn{1}{c|}{{\color[HTML]{333333} 65.017}} & \multicolumn{1}{c|}{{\textbf{68.07}}} & \multicolumn{1}{c|}{HP-train-acc} & {\color[HTML]{EA4335} 100} & {\color[HTML]{EA4335} 100} & {\color[HTML]{EA4335} 100} \\ \hline
    \end{tabular}
    }
    \caption{Results on a ResNet50 pertrained on ImageNet-1K in a SimCLR task.}
    \label{tab:app_tab4}
\end{table}

\section{Analyze linear overparameterization problem}
\label{append:linear}
\subsection{Formalize the change of representations}

\cref{sec:interaction} provides some intuitive explanations of how feature extractor $f(\vx;\vB)$ changes given different $g(\vz;\vv)$,
which are well supported by the experimental results.
To provide more insights,
we analyze the change of features (i.e., $\vz$) in a simplified overparameterization problem,
i.e., the one provided in \cref{sec:background} and in \citet{kumar2022fine}.
Under some mild assumptions and approximations,
we provide an overview of how the norm and direction of $\vz$ changes under different choice of $\vv_0$.
We show that the $\vv_0$ satisfying $\vq_0=Y$ or $\vq_0=\frac{1}{2}Y$ are two critical points in feature adaptation.

We first rewrite \cref{eq:overparm_loss} in a non-matrix form:

\begin{equation}
    \loss_{\vB,\vv}=\frac{1}{N}\sum_{n=1}^N \frac{1}{2} \| \vv^\top \vB \vx^{(n)} - y\|_2^2,
\end{equation}

where $y\in\R$ as we are considering a regression problem 
(or a classification problem using MSE loss).
We use the subscript to represent the time step,
e.g., $\vq_0$, $\vz_0$, $\vv_0$ and $\vB_0$ are output, 
feature, head parameters and backbone parameters before finetuning.
Similarly, $\vq_t$, $\vz_t$, $\vv_t$ and $\vB_t$ are the corresponding values after finetuned $t$ steps.
Note that $\vq(X)$ and $\vz(X)$ are functions of $N$ input samples $X\in\R^{N\times d}$,
but we omit it for simplicity.
We use lowercase letters to represent the $n$-th element of the vector.
Before discussing a specific element, 
we will clarify whether we are discussing the initialized case or the finetuned case.
For example, specifying $\vq_t$,
we use $q_n=f(\vx_n)=(f(X))_n$ to represent the prediction of $\vx_n$ after finetuning.

Remember the goal of this paper is finding a suitable way to select task head,
i.e., $\vv_0$, given the pretrained feature extractor, i.e., $\vB_0$.
Depending on the discrepancy between the pretraining task and the downstream task,
we might expect $\vz_t$ change differently after finetuning.
In other words, we care about the expected change of $\vz_t$ compared to $\vz_0$,
i.e., $\mathbb{E}_{\vx\sim\mathcal{D}}\left[d(\vz_t,\vz_0) \right]$,
where $\mathcal{D}$ is the data distribution of the downstream task.
Depending on what distance measurement (i.e., $d(\cdot,\cdot)$) we choose,
there are three different metrics:

\begin{itemize}
    \item Euclidean: $\bar{d}_{euc}\triangleq\mathbb{E}_{\vx\sim\mathcal{D}}\left[\|\vz_t-\vz_0\|_2^2 \right]$,
    \item Dot product: $\bar{d}_{dot}\triangleq\mathbb{E}_{\vx\sim\mathcal{D}}\left[\vz_t^\top\vz_0 \right]$,
    \item Cosine: $\bar{d}_{\cos}\triangleq\mathbb{E}_{\vx\sim\mathcal{D}}\left[\frac{\vz_t^\top\vz_0}{\|\vz_t\|_2\cdot\|\vz_0\|_2} \right]$.
\end{itemize}

We start from the Euclidean case:

\begin{align}
    \bar{d}_{euc}
        &= \mathbb{E}_{\vx\sim\mathcal{D}}\left[(\vz_t-\vz_0)^\top (\vz_t-\vz_0) \right]\nonumber\\ 
        &= \mathbb{E}_{\vx\sim\mathcal{D}}\left[\vz_t^\top\vz_t - \vz_t^\top\vz_0 - \vz_0^\top\vz_t + \vz_0^\top\vz_0 \right]\nonumber\\
        &= \mathbb{E}_{\vx\sim\mathcal{D}}\left[\vz_t^\top\vz_t - 2\vz_0^\top\vz_t + \vz_0^\top\vz_0 \right]\nonumber\\
        &= \mathbb{E}_{\vx\sim\mathcal{D}}\left[\vx^\top\vB_t^\top\vB_t\vx - 2\vx^\top\vB_0^\top\vB_t\vx + \vx^\top\vB_0^\top\vB_0\vx \right]\nonumber\\
        &= \mathbb{E}_{\vx\sim\mathcal{D}}\left[ tr(\vx^\top(\vB_t^\top\vB_t - 2\vB_0^\top\vB_t+\vB_0^\top\vB_0)\vx) \right]\nonumber\\
        &= \mathbb{E}_{\vx\sim\mathcal{D}}\left[ tr((\vB_t^\top\vB_t - 2\vB_0^\top\vB_t+\vB_0^\top\vB_0)\vx\vx^\top) \right]\nonumber\\
        &= \mathbb{E}_{\vx\sim\mathcal{D}}\left[ tr((\vB_t^\top\vB_t - 2\vB_0^\top\vB_t+\vB_0^\top\vB_0)\vx\vx^\top) \right]\nonumber\\
        &\leq \mathbb{E}_{\vx\sim\mathcal{D}}\left[ tr(\vB_t^\top\vB_t - 2\vB_0^\top\vB_t+\vB_0^\top\vB_0)\cdot M \right]\nonumber\\
        &= \left[tr(\vB_t^\top\vB_t) - 2tr(\vB_0^\top\vB_t) + tr(\vB_0^\top\vB_0)\right]\cdot \mathbb{E}_{\vx\sim\mathcal{D}}[M]
\label{eq:d_euc} 
\end{align}
where $M\triangleq tr(\vx\vx^\top)$ or $M\triangleq \|\vx\vx^\top\|_{op})$.

Similarly, we can get the expressions of $\bar{d}_{dot}$ and $\|\vz_t\|_2^2$, which are building blocks of $\bar{d}_{\cos}$:

\begin{equation}
    \mathbb{E}_{\vx\sim\mathcal{D}}\left[\vz_t^\top\vz_0 \right] 
    = M\cdot tr(\vB_0^\top\vB_t)   \label{eq:d_dot}   
\end{equation}

\begin{equation}
    \mathbb{E}_{\vx\sim\mathcal{D}}\left[\|\vz_t\|_2^2\right]
    = M\cdot tr(\vB_t^\top\vB_t) \label{eq:zt_norm}    
\end{equation}

\subsection{Critical Points}
Our goal is to find a good initialized task head, i.e., $\vv_0$,
which can lead to better downstream performance.
Instead of directly linking $\vv_0$ to the expected risk,
which is a common practice for generalization analysis 
\footnote{But it is hard to get tight and informative bounds in deep learning,
especially in such a practical scenario.},
we consider this problem from another indirect way.
Specifically, we assume the SGD algorithm with appropriate regularization in FT stage can find a good optimum for the learning task.
What we care more about is whether the features learned from the pretraining stage adapts well to the downstream task.
We believe that if the features are properly adapted to the new environment,
the model has more potential to generalize better.
Hence in this part,
considering the aforementioned three distance metrics as the target functions,
we formalize how $\vv_0$ influence them.

From \Cref{eq:d_euc}, (\ref{eq:d_dot}) and (\ref{eq:zt_norm}), 
we find that the behavior of $tr(\vB_0^\top\vB_t)$ and $tr(\vB_t^\top\vB_t)$ are the keys.
In these two terms,
$\vB_0$ is given and cannot change,
while $\vB_t$ is determined by the choice of $\vv_0$.
One way to link these two quantities is using Lemma A.4 in \citet{kumar2022fine} or theorem 2.2 in \citet{du2018algorithmic}:
\begin{equation}
    \vv_0 \vv_0^\top - \vB_0\vB_0^\top = \vv_t \vv_t^\top - \vB_t \vB_t^\top, \quad\forall t,
\end{equation}
however, the expression of $\vv_t$ is still hard to obtain
(but collecting and visualizing $\vv_t$ is much more cheaper than $\vB_t$).
We left this direction for our future work.
In this paper,
we analyze the problem in the NTK regime 
(this is the main assumption of our analysis,
which can cause discrepancies between the theory and experiments).

\textbf{Closed-form of parameters under NTK approximation:}

To get more insights,
we approximate the behavior of this model in the NTK regime,
in which the converged parameters can be analytically calculated.
Specifically, be applying Equation (8) in \citet{lee2019wide} and assuming $t\rightarrow\infty$,
we can have:
\begin{equation}
    \theta_t = \theta_0 -(\nabla_\theta\vq_0)^\top \mathcal{K}_0^{-1} (\vq_0-Y),
    \label{eq:ntk_theta}
\end{equation}
where $\vq_0\in\R^{N\times 1}$ is the model's prediction on $N$ training samples, i.e., $X$,
and $\theta_t,\theta_0\in\R^{(d+1)*h\times 1}$ are the stacked parameters.
Without loss of generality,
the first $d*h$ parameters in $\theta$ come from $\vB$ and the last $h$ parameters come from $\vv$. 

The $\mathcal{K}_0=\nabla_\theta\vq_0\cdot(\nabla_\theta\vq_0)^\top\in\R^{N\times N}$ here is the empirical NTK on $X$.
Specifically, by stacking the paramters,
we can calculate each elements in this kernel as:
\begin{align}
    \kappa(\vx,\vx')
    &=\vx^\top \vB^\top \vB \vx' + \sum_{i=1}^h \sum_{j=1}^d (v_ix_j) (v_ix_j')\nonumber\\ 
    &=\vx^\top \vB^\top \vB \vx' + \sum_{i=1}^h v_i^2 \sum_{j=1}^d x_j x_j'\nonumber\\
    &= \vx^\top(\vB^\top \vB + \|\vv\|_2^2\cdot I_{d\times d}) \vx',
\end{align}
where $v_i$ and $x_i$ is the $i$-th element in $\vv$ and $\vx$ respectively.
Then, the matrix form of emperical NTK is $\mathcal{K}_0 = X(\vB_0^\top \vB_0 + \|\vv_0\|_2^2\cdot I_{d\times d}) X^\top$.

With the help of \Cref{eq:ntk_theta},
we can get the closed-form expression of $\vb_t\triangleq\text{vec}(\vB_t)$ 
(i.e., the vectorization of matrix $\vB_t$):
\begin{equation}
    \vb_t = \vb_0 - (\nabla_{\vb}\vq_0)^\top \mathcal{K}_0^{-1} (\vq_0-Y),
    \label{eq:ntk_b}
\end{equation}

In \Cref{eq:ntk_b}, we know $\vq_0=X\vB_0^\top\vv_0$.
The term $\nabla_{\vb}\vq_0$ also depends on $\vv_0$.
As controlling $\vq_0$, i.e., the model's prediction on training samples,
is easier in practice (we can directly observe the training loss or training accuracy),
we will consider $\vq_0$ as the optimizing variable in the rest of the paper.
The following lemma can link $\nabla_{\vb}\vq_0$ to $\vq_0$:

\begin{lem}
For $\vb_0\in\R^{h*d\times 1}$ and $\nabla_{\vb}\vq_0\in\R^{N\times h*d}$,
we have $\nabla_{\vb}\vq_0\cdot\vb_0=\vq_0$.
\label{lem:cal_q0}
\end{lem}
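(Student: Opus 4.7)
The plan is to exploit the fact that $\vq_0 = X\vB_0^\top \vv_0$ is a \emph{linear} function of the entries of $\vB_0$ when $\vv_0$ and $X$ are held fixed. Since vectorization $\vb \mapsto \mathrm{vec}(\vB) = \vb$ is a linear bijection, the map $\vb_0 \mapsto \vq_0$ is also linear, so there must exist a matrix $A = A(X,\vv_0) \in \R^{N \times hd}$, independent of $\vb_0$, such that $\vq_0 = A\vb_0$. From this, $\nabla_{\vb}\vq_0 = A$, and hence $\nabla_{\vb}\vq_0 \cdot \vb_0 = A\vb_0 = \vq_0$. In other words, the claim is just Euler's identity for degree-one homogeneous functions applied to the linear map $\vb_0 \mapsto \vq_0$.

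To make this concrete, I would first write out the $n$-th component of $\vq_0$ coordinate-wise:
\begin{equation*}
    (\vq_0)_n \;=\; \vv_0^\top \vB_0 \vx^{(n)} \;=\; \sum_{i=1}^h \sum_{j=1}^d (v_0)_i \,(B_0)_{ij}\, (x^{(n)})_j .
\end{equation*}
This display exhibits the linear coefficient of $(B_0)_{ij}$ in $(\vq_0)_n$ as $(v_0)_i (x^{(n)})_j$. Under any fixed vectorization convention for $\vb_0 = \mathrm{vec}(\vB_0)$, the matrix $A$ has entries $A_{n,(i,j)} = (v_0)_i (x^{(n)})_j$, which is exactly $\partial (\vq_0)_n / \partial (B_0)_{ij}$. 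Hence $\nabla_{\vb}\vq_0 = A$, and multiplying on the right by $\vb_0$ recovers the same sum as the display above, giving $\nabla_{\vb}\vq_0 \cdot \vb_0 = \vq_0$.

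There is no real obstacle here beyond bookkeeping: the lemma is an immediate consequence of linearity, and the only thing to be careful about is using a consistent vectorization convention so that the ordering of coordinates in $\vb_0$ matches the column ordering of $\nabla_{\vb}\vq_0$. Once that convention is fixed, the identity $A\vb_0 = \vq_0$ is automatic.
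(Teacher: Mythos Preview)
Your proposal is correct and follows essentially the same route as the paper: both arguments verify the identity componentwise by writing $(\vq_0)_n = \sum_{i,j}(v_0)_i(B_0)_{ij}(x^{(n)})_j$ and observing that the coefficient of $(B_0)_{ij}$ is precisely $\partial(\vq_0)_n/\partial(B_0)_{ij}$. Your added framing via linearity/Euler's identity is a clean way to see why the computation must work, but the underlying calculation is the same as the paper's.
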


\begin{proof}
We check that equation elementwise.
For the $n$-th element in the RHS, we have:
\begin{align}
    q_n &= \vx_n^\top\vB_0^\top\vv_0 \nonumber\\
    &= \sum_{i=1}^d \sum_{j=1}^h x_i v_j (\vB_0^\top)_{i,j} \nonumber\\
    &= \sum_{i=1}^d \sum_{j=1}^h x_i v_j (\vB_0)_{j,i},
    \label{eq:rhs_qn}
\end{align}
where $q_n$ is the $n$-th element of $\vq_0$, $x_i$ is the $i$-th element of $\vx_n$, and $v_j$ is the $j$-th element of $\vv_0$.

For the LHS, the $n$-th value is $\ve_n^\top\nabla_{\vb}\vq_0\cdot\vb_0$,
where $\ve_n^\top$ is a one-hot row vector selecting the $n$-th row of $\nabla_{\vb}\vq_0$.
From the definition,
we know $\ve_n^\top\nabla_{\vb}\vq_0=\text{vec}(\vv_0\vx_n^\top)^\top=[x_1\vv_0^\top, x_2\vv_0^\top,...,x_d\vv_0^\top]$, which is a long row vector.
Then, the $n$-th element in the LHS should be:
\begin{align}
    \ve_n^\top\nabla_{\vb}\vq_0\cdot\vb_0 
    &= \text{vec}(\vv_0\vx_n^\top)^\top\cdot \vb_0 \nonumber\\
    &= \sum_l^{h*d} (\text{vec}(\vv_0\vx_n^\top))_l(\vb_0)_l \nonumber\\
    &= \sum_{i=1}^d \sum_{j=1}^h x_i v_j (\vB_0)_{j,i} \nonumber\\
    &= q_n,
    \label{eq:lhs_eqb}
\end{align}
where the last equation holds the rule of stacking elements in a matrix.
As each elements of the two sides are equal,
the lemma holds.
\end{proof}

\textbf{Critical points of $tr(\vB_0^\top\vB_t)$ and $tr(\vB_t^\top\vB_t)$:}
\begin{lem}
    $\vq_0^*=\frac{1}{2}Y$ is a maximum of $tr(\vB_0^\top\vB_t)$.
    \label{lem:b0bt}
\end{lem}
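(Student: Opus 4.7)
The plan is to express $tr(\vB_0^\top \vB_t)$ as an explicit scalar function of $\vq_0$ using the closed-form NTK solution \eqref{eq:ntk_b} together with \Cref{lem:cal_q0}, then optimize directly. First, since the trace of a product of matrices equals the Frobenius inner product, $tr(\vB_0^\top \vB_t)=\langle \vB_0,\vB_t\rangle_F=\vb_0^\top \vb_t$. Substituting \eqref{eq:ntk_b} yields
\begin{equation*}
tr(\vB_0^\top \vB_t) \;=\; \|\vb_0\|_2^2 \;-\; \vb_0^\top(\nabla_{\vb}\vq_0)^\top \mathcal{K}_0^{-1}(\vq_0 - Y).
\end{equation*}

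Next, I would apply \Cref{lem:cal_q0} after transposing: $\vb_0^\top(\nabla_{\vb}\vq_0)^\top = (\nabla_{\vb}\vq_0 \cdot \vb_0)^\top = \vq_0^\top$. This collapses the awkward Jacobian factor into a clean quadratic form in the prediction vector:
\begin{equation*}
g(\vq_0) \;\triangleq\; tr(\vB_0^\top \vB_t) \;=\; \|\vb_0\|_2^2 \;-\; \vq_0^\top \mathcal{K}_0^{-1}\vq_0 \;+\; \vq_0^\top \mathcal{K}_0^{-1} Y.
\end{equation*}
Under the NTK/lazy-regime assumption already invoked in \eqref{eq:ntk_theta}, I treat $\mathcal{K}_0$ as fixed with respect to the perturbation of $\vq_0$ being considered (equivalently, I hold $\|\vv_0\|_2$ fixed and only vary the direction of $\vv_0$, which is what $\vq_0 = X\vB_0^\top \vv_0$ controls). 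Because $\mathcal{K}_0 \succ 0$ (it is a Gram matrix of $\nabla_\theta \vq_0$), its inverse is positive definite, so $g$ is a strictly concave quadratic in $\vq_0$. Setting $\nabla_{\vq_0} g = \mathcal{K}_0^{-1}(Y - 2\vq_0) = \bm 0$ yields the unique global maximizer $\vq_0^* = \tfrac12 Y$, which is the claim.

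The main obstacle is the implicit dependence of $\mathcal{K}_0$ on $\vv_0$ (and hence on $\vq_0$) through the $\|\vv_0\|_2^2 I_{d\times d}$ term in $\mathcal{K}_0 = X(\vB_0^\top\vB_0 + \|\vv_0\|_2^2 I)X^\top$. A careful statement therefore needs to specify that the optimization is performed over choices of $\vv_0$ with a fixed norm (so that $\mathcal{K}_0$ is genuinely held constant), or alternatively to argue that in the lazy regime the first-order condition is unchanged because any variation of $\mathcal{K}_0$ produces only higher-order corrections around the critical point $\vq_0 = \tfrac12 Y$. Either framing makes the quadratic analysis go through and recovers the stated critical point.
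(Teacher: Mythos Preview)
Your proof is correct and follows essentially the same route as the paper: rewrite $tr(\vB_0^\top\vB_t)=\vb_0^\top\vb_t$, substitute the NTK closed form \eqref{eq:ntk_b}, collapse $\vb_0^\top(\nabla_{\vb}\vq_0)^\top$ to $\vq_0^\top$ via \Cref{lem:cal_q0}, and optimize the resulting concave quadratic in $\vq_0$ (the paper writes it as $\text{const.}-\vq_0^\top C_1(\vq_0-Y)$ with $C_1=\mathcal{K}_0^{-1}$ but the computation is identical). Your additional remark about the implicit dependence of $\mathcal{K}_0$ on $\|\vv_0\|_2$ is a valid caveat that the paper simply suppresses by treating $C_1$ as a constant.
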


\begin{proof}
By definition, we have:
\begin{align}
    tr(\vB_0^\top\vB_t)
    &= \vb_0^\top\vb_t \nonumber \\
    &= \vb_0^\top \left(\vb_0 - (\nabla_{\vb}\vq_0)^\top \mathcal{K}_0^{-1} (\vq_0-Y)\right) \nonumber \\
    &= \vb_0^\top\vb_0 - (\nabla_{\vb}\vq_0\cdot\vb_0)^\top\mathcal{K}_0^{-1} (\vq_0-Y)\nonumber \\
    &= \text{const.}- \vq_0^\top C_1 (\vq_0-Y),
    \label{eq:b0bt}
\end{align}
where we call $C_1=\mathcal{K}_0^{-1}$ for convenience.
By taking first derivative to $\vq_0$ and let is equal zero,
we know $\vq_0^*=\frac{1}{2}Y$ is a critical point.
As the second derivative is $-2C_1$ and $C_1$ is positive definite,
this critical point is a maximum.
\end{proof}

\begin{lem}
    $\vq_0^*=\alpha Y, \alpha\in(0,0.5)$ is a critical point (usually maximum) of $tr(\vB_t^\top\vB_t)$.
    Here $\alpha=\left( I_{N\times N} + (C_2-2C_1)^{-1}C_1 \right)Y$, where $C_2=\mathcal{K}_0^{-1} \Tilde{\mathcal{K}}_0 \mathcal{K}_0^{-1}$ and $\Tilde{\mathcal{K}}_0=X\vB_0^\top\vB_0 X^T=(\nabla_{\vv}\vq_0)(\nabla_{\vv}\vq_0)^\top$ is the NTK when fixing the parameters of the backbone.
    \label{lem:btbt}
\end{lem}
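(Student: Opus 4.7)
The plan is to imitate the proof of Lemma~\ref{lem:b0bt} but on the full squared Frobenius norm $\tr(\vB_t^\top\vB_t)=\vb_t^\top\vb_t$. Writing $A\triangleq\nabla_{\vb}\vq_0$, I would plug the NTK closed form~\eqref{eq:ntk_b} into $\vb_t^\top\vb_t$ and expand to obtain three pieces: the constant $\vb_0^\top\vb_0$; a linear-in-$(\vq_0-Y)$ cross term $-2\vb_0^\top A^\top \mathcal{K}_0^{-1}(\vq_0-Y)$; and a quadratic-in-$(\vq_0-Y)$ sandwich of the matrix $\mathcal{K}_0^{-1}AA^\top\mathcal{K}_0^{-1}$.

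I would then simplify the cross term by invoking Lemma~\ref{lem:cal_q0}: since $A\vb_0=\vq_0$, it collapses to $-2\vq_0^\top C_1(\vq_0-Y)$, exactly the object already analyzed in Lemma~\ref{lem:b0bt}. The key new observation handles the sandwich: because the full parameter vector $\theta$ stacks the backbone block $\vb$ and the head block $\vv$, the empirical NTK splits additively as $\mathcal{K}_0=AA^\top+\Tilde{\mathcal{K}}_0$, so $AA^\top=\mathcal{K}_0-\Tilde{\mathcal{K}}_0$ and the sandwich reduces to $(\vq_0-Y)^\top(C_1-C_2)(\vq_0-Y)$. Combining these observations yields the clean quadratic
\[
\tr(\vB_t^\top\vB_t)=\text{const}-2\vq_0^\top C_1(\vq_0-Y)+(\vq_0-Y)^\top(C_1-C_2)(\vq_0-Y).
\]

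To locate the critical point I would differentiate w.r.t.\ $\vq_0$ using symmetry of $C_1$ and $C_2$, set the gradient to zero, and solve the resulting linear system. A short manipulation using $I+(C_2-2C_1)^{-1}C_1=(C_2-2C_1)^{-1}(C_2-C_1)$ then rearranges the solution into the stated form $\vq_0^*=\alpha Y$. To classify the extremum I would compute the Hessian, which is an explicit combination of $C_1$ and $C_2$; since $C_1=\mathcal{K}_0^{-1}$ is positive definite and $C_2=\mathcal{K}_0^{-1}\Tilde{\mathcal{K}}_0\mathcal{K}_0^{-1}$ is positive semidefinite, one expects the Hessian to be negative definite in the overparameterized regime, yielding a maximum as claimed.

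The main obstacles I anticipate are two-fold. First, matching the precise closed form requires $C_2-2C_1$ to be invertible, which is not automatic and should be tied back to a condition on the size of $\Tilde{\mathcal{K}}_0$ relative to $\mathcal{K}_0$. Second, substantiating the ``usually'' hedge in the statement amounts to tracking how much of the NTK's energy sits in the head direction $\Tilde{\mathcal{K}}_0$ versus the combined direction $\mathcal{K}_0$; bounding the eigenvalues of $C_1^{-1}C_2$ is both what makes the Hessian negative definite and what forces the coefficient $\alpha$ to lie in the claimed $(0,0.5)$ range, so I would handle both issues via a single eigenvalue estimate on $\mathcal{K}_0^{-1}\Tilde{\mathcal{K}}_0$.
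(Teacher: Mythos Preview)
Your overall plan is exactly the paper's: expand $\vb_t^\top\vb_t$ from \cref{eq:ntk_b}, collapse the cross term via \cref{lem:cal_q0} to $-2\vq_0^\top C_1(\vq_0-Y)$, and then identify the sandwich $\mathcal{K}_0^{-1}AA^\top\mathcal{K}_0^{-1}$ with a named kernel so that the whole thing becomes a quadratic in $\vq_0$.

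The one substantive discrepancy is how the sandwich is identified. The paper replaces $AA^\top=(\nabla_{\vb}\vq_0)(\nabla_{\vb}\vq_0)^\top$ directly by $\Tilde{\mathcal{K}}_0$, obtaining
\[
\tr(\vB_t^\top\vB_t)=\text{const.}-2\vq_0^\top C_1(\vq_0-Y)+(\vq_0-Y)^\top C_2(\vq_0-Y),
\]
and it is \emph{this} quadratic whose stationary equation gives $\vq_0^*=(C_2-2C_1)^{-1}(C_2-C_1)Y=(I+(C_2-2C_1)^{-1}C_1)Y$ with Hessian $2(C_2-2C_1)$. You instead use $AA^\top=\mathcal{K}_0-\Tilde{\mathcal{K}}_0$ (which is the correct decomposition given that $\Tilde{\mathcal{K}}_0$ is defined as the \emph{head} kernel $(\nabla_{\vv}\vq_0)(\nabla_{\vv}\vq_0)^\top$), and this yields the sandwich $C_1-C_2$ rather than $C_2$. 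Carrying your quadratic through gives
\[
\nabla_{\vq_0}\tr(\vB_t^\top\vB_t)=-2(C_1+C_2)\vq_0+2C_2Y,\qquad \vq_0^*=(C_1+C_2)^{-1}C_2\,Y,\qquad \text{Hessian}=-2(C_1+C_2),
\]
which is \emph{not} the formula in the statement. So the ``short manipulation using $I+(C_2-2C_1)^{-1}C_1=(C_2-2C_1)^{-1}(C_2-C_1)$'' cannot rearrange your critical point into the stated one; that identity is the paper's last step starting from its $C_2$-sandwich, not from your $C_1-C_2$-sandwich.

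In short: same route, but to reproduce the lemma \emph{as stated} you must follow the paper's identification $AA^\top\leadsto\Tilde{\mathcal{K}}_0$ (so the quadratic coefficient is $C_2$), not $\mathcal{K}_0-\Tilde{\mathcal{K}}_0$. If you keep your (definitionally correct) decomposition, you will prove a variant with critical point $(C_1+C_2)^{-1}C_2Y$ and an always-negative-definite Hessian; that is arguably cleaner (no invertibility caveat on $C_2-2C_1$, and the ``usually maximum'' becomes ``always maximum''), but it does not match the displayed $\alpha$ and you should flag the discrepancy rather than claim the manipulation goes through.
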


\begin{proof}
Similar to \cref{lem:b0bt},
we can write $tr(\vB_t^\top\vB_t)$ as $\vb_t^\top\vb_t$ and substitute \cref{eq:ntk_b}:
\begin{align}
tr(\vB_t^\top\vB_t)
    &= \vb_t^\top\vb_t \nonumber \\
    &= \left(\vb_0 - (\nabla_{\vb}\vq_0)^\top \mathcal{K}_0^{-1} (\vq_0-Y)\right)^\top \left(\vb_0 - (\nabla_{\vb}\vq_0)^\top \mathcal{K}_0^{-1} (\vq_0-Y)\right) \nonumber \\
    &= \vb_0^\top\vb_0 -2(\nabla_{\vb}\vq_0\cdot\vb_0)^\top \mathcal{K}_0^{-1} (\vq_0-Y) + 
    (\vq_0-Y)^\top\mathcal{K}_0^{-1}(\nabla_{\vb}\vq_0)(\nabla_{\vb}\vq_0)^\top\mathcal{K}_0^{-1}(\vq_0-Y)\nonumber \\
    &= \vb_0^\top\vb_0 - 2\vq_0^\top\mathcal{K}_0^{-1} (\vq_0-Y) + (\vq_0-Y)^\top\mathcal{K}_0^{-1} \Tilde{\mathcal{K}_0}\mathcal{K}_0^{-1}(\vq_0-Y)\nonumber \\
    &= \vb_0^\top\vb_0 - 2\vq_0^\top C_1 (\vq_0-Y) + (\vq_0-Y)^\top C_2(\vq_0-Y)\nonumber \\
    &= (\vb_0^\top\vb_0+Y^\top C_2 Y) + \vq_0^\top(2C_1-2C_2)Y + \vq_0^\top (C_2-2C_1)\vq_0 \nonumber \\
    &= \text{const.} + \vq_0^\top(2C_1-2C_2)Y + \vq_0^\top (C_2-2C_1)\vq_0
    \label{eq:btbt}
\end{align}
By taking first derivative and letting it equal zero,
assuming $(C_2-2C_1)$ is invertible,
we know $\vq_0^*=\left( I_{N\times N} + (C_2-2C_1)^{-1}C_1 \right)Y$ is a critical point.
The second derivative is $2C_2-4C_1$,
which is usually positive definite in our settings (explain later).
\end{proof}

To get more insights,
we can look deeper into the critical point mentioned in \cref{lem:btbt}.
Following the definition of $C_1$ and $C_2$, we can have:
\begin{align}
    C_2-2C_1
    &= \mathcal{K}_0^{-1}\Tilde{\mathcal{K}}_0 \mathcal{K}_0^{-1} - 2\mathcal{K}_0^{-1} \nonumber\\
    &= (\mathcal{K}_0^{-1}\Tilde{\mathcal{K}}_0 - 2I_{N\times N}) \mathcal{K}_0^{-1},
    \label{eq:2c1_c2}
\end{align}
where $\mathcal{K}_0^{-1}\Tilde{\mathcal{K}}_0$ is the key of understanding this term.

The exact form of this expression is hard to obtain,
but as all the NTK or covariance matrices mentioned here are sysmetric,
we can compare the trace of them to get some insights.
By definition,
$tr(\Tilde{\mathcal{K}}_0)=tr(X\vB_0^\top \vB_0 X^\top)=tr(\vB_0^\top\vB_0 X^\top X)$.
If each dimensions of the data samples are independent,
then $X^\top X\approx I_{d\times d}$,
and $tr(\Tilde{\mathcal{K}}_0)\approx tr(\vb_0^\top\vb_0)=\|\vb_0\|_2^2$.
Then similarly, $tr(\mathcal{K}_0)\approx\|\vb_0\|_2^2+\|\vv_0\|_2^2$.
Thus the behavior of term $\mathcal{K}_0^{-1}\Tilde{\mathcal{K}}_0$ can be described by
$\frac{\|\vb_0\|_2^2}{\|\vb_0\|_2^2+\|\vv_0\|_2^2}\cdot I_{N\times N}$:
\begin{itemize}
    \item if $\|\vb_0\|_2^2 \gg \|\vv_0\|_2^2$, which might happen as the backbone contains more parameters than the head, $\mathcal{K}_0^{-1}\Tilde{\mathcal{K}}_0\approx I_{N\times N}$,
    \item if $\|\vb_0\|_2^2 \ll \|\vv_0\|_2^2$, which might happen if we split the network in the earlier layer, $\mathcal{K}_0^{-1}\Tilde{\mathcal{K}}_0\approx 0\cdot I_{N\times N}$.
\end{itemize}
But in either case,
the negative of the second derivative of $tr(\vB_t^\top\vB_t)$, i.e., \cref{eq:2c1_c2},
is likely to be positive definite,
which means the critical point in \cref{lem:btbt} is usually a maximum.
Another interesting fact of \cref{lem:btbt} is $\vq_0^*$ under the aforementioned two extreme conditions.
After some calculation,
we can verify that $\vq_0^*\rightarrow 0\cdot Y$ if $\|\vb_0\|_2^2 \gg \|\vv_0\|_2^2$ and $\vq_0^*\rightarrow \frac{1}{2}Y$ if $\|\vb_0\|_2^2 \ll \|\vv_0\|_2^2$.

\begin{figure}[t]
    \centering
    \includegraphics[width=1.0\textwidth]{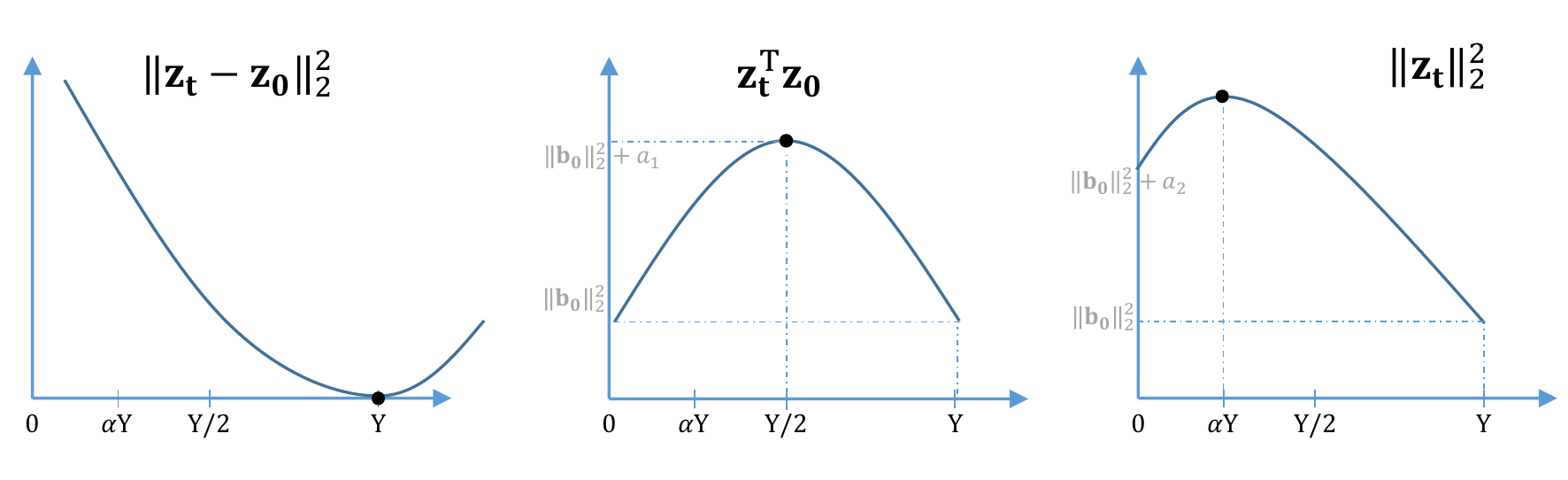}
    \caption{Illustrations of different metrics if they are 1-D quadratic functions. Here $a_1=\alpha Y^\top C_1 Y$ and $a_2=Y^\top C_2 Y$. The x-axis represents the choice of $\vq_0$, where $\alpha Y$ is an example of the critical point of $\|\vz_t\|_2^2$. The black dots are the critical points.}
    \label{fig:square_curve}
\end{figure}

\textbf{Critical points of $\bar{d}_{euc}$, $\bar{d}_{dot}$ and $\bar{d}_{cos}$:}

Recall the definitions of different distance metrics we care about.
First,
$\bar{d}_{dot}$ is proportional to $\vz_t^\top\vz_0$,
hence its shape would like the second panel in \cref{fig:square_curve}.
For $\bar{d}_{euc}$,
we know it is proportional to $tr(\vB_t^\top\vB_t) - 2tr(\vB_0^\top\vB_t) + tr(\vB_0^\top\vB_0)$.
Substituting results in \cref{eq:b0bt} and \cref{eq:btbt},
we have:
\begin{align}
    \bar{d}_{euc}
    &\propto tr(\vB_t^\top\vB_t) - 2tr(\vB_0^\top\vB_t) + \text{const.}\nonumber\\
    &= \vq_0^\top(2C_1-2C_2)Y + \vq_0^\top (C_2-2C_1)\vq_0 + 2\vq_0^\top C_1 (\vq_0-Y) + \text{const.}\nonumber\\
    &= -2\vq_0^\top C_2 Y + \vq_0^\top C_2\vq_0 +\text{const.}
\end{align}
Obviously, $\vq_0^*=Y$ is the minimum of $\bar{d}_{euc}$,
as depicted in the first panel in \cref{fig:square_curve}.
The shape of $\bar{d}_{cos}$ is hard to obtain.
However in the last panel of \cref{fig:square_curve}, 
we demonstrate $\|\vz_t\|_2^2$,
its denominator, to assist our further analysis.

\subsection{Learning dynamics of the representation}

\begin{table}[h]
    \centering
    \begin{tabular}{c|ccccc}
    \hline
     $\vq_0=$& $E_{aie}$ & $\bar{d}_{euc}$   & $\bar{d}_{dot}$ & $\|\vz_t\|_2$ & $\bar{d}_{cos}$ \\ \hline
    \textbf{$Y\rightarrow \frac{1}{2}Y$} 
            & $\uparrow$ & $\uparrow$ & $\uparrow$   & $\uparrow$    & $?$ ($\downarrow$) \\
    \textbf{$\frac{1}{2}Y\rightarrow \alpha Y$} 
            & $\uparrow$ & $\uparrow$ & $\downarrow$ & $\uparrow$    & $\downarrow\downarrow$ \\
    \textbf{$\alpha Y\rightarrow 0$} 
            & $\uparrow$ & $\uparrow$ & $\downarrow$ & $\downarrow$  & $?$ ($\downarrow$) \\ \hline
    \end{tabular}
    \caption{Question marks in the last column means we cannot accurately predict its change. But it usually decreases in experiments.}
    \label{tab:dist_trend}
\end{table}

In this subsection,
we will put everything together to provide an overview of how $\vz_t$ changes compared with $\vz_0$. 
Remember when $\vq_0=Y$,
the gradient of $\vB_0$ is zero,
hence $\vz_t=\vz_0$ \citep{kumar2022fine}.
When $\vq_0$ moves linearly from $Y$ to $0$,
we have the following three phases
(see \cref{tab:dist_trend}):

\begin{itemize}
    \item $Y\rightarrow\frac{1}{2}Y$: $\vz_t$ lengthen its norm with a slightly change in the direction, hence $\bar{d}_{dot}$ also increase;
    \item $\frac{1}{2}Y\rightarrow\alpha Y$: the norm of $\vz_t$ keeps increasing, but its direction drastically changes in this phase, which makes $\bar{d}_{dot}$ decrease;
    \item $\alpha Y\rightarrow 0$: the norm of $\vz_t$ begins to decrease and the angle between $\vz_t$ and $\vz_0$ keeps increasing, which makes $\vz_t$ changes a lot.
\end{itemize}

\end{document}